\documentclass{article}

\usepackage{microtype}
\usepackage{graphicx}
\usepackage{subcaption}
\usepackage{booktabs} % for professional tables

\usepackage{hyperref}

\usepackage[preprint]{icml2026}

\usepackage{amsmath}
\usepackage{amssymb}
\usepackage{mathtools}
\usepackage{amsthm}

\usepackage[capitalize,noabbrev]{cleveref}

\usepackage{fontawesome} 
\usepackage{multirow}
\theoremstyle{plain}
\newtheorem{theorem}{Theorem}[section]
\newtheorem{proposition}[theorem]{Proposition}
\newtheorem{lemma}[theorem]{Lemma}

\theoremstyle{definition}

\theoremstyle{remark}
\newtheorem{remark}[theorem]{Remark}
\usepackage{enumerate}

\usepackage{amsmath,amssymb}
\usepackage{xcolor}
\usepackage{tikz}
\usetikzlibrary{arrows.meta,positioning,calc}

\definecolor{taskC}{RGB}{255,143,143}
\definecolor{entC}{RGB}{170,190,255}
\definecolor{balC}{RGB}{255,210,150}
\definecolor{taskD}{RGB}{200, 40, 40}
\definecolor{entD}{RGB}{40, 80,190}
\definecolor{balD}{RGB}{180,100, 20}
\usepackage[table]{xcolor}

\usepackage{pgfplots}
\pgfplotsset{compat=newest}
\definecolor{color1}{HTML}{84B818}
\definecolor{color2}{HTML}{D18B12}
\definecolor{color3}{HTML}{7F7F7F}
\definecolor{color4}{HTML}{F85A3E}
\definecolor{color5}{HTML}{4B6CFC}
\definecolor{color6}{HTML}{FF7F0E}
\definecolor{color7}{HTML}{2CA02C}
\definecolor{color8}{HTML}{D62728}
\definecolor{color9}{HTML}{9467BD}
\definecolor{color10}{HTML}{8C564B}
\definecolor{color11}{HTML}{E377C2}
\definecolor{color12}{HTML}{17BECF}
\newcommand\our{DSRD}
\newcommand\ours{Dual-Scale Retentive Dynamics}
    
\usepackage[textsize=tiny]{todonotes}

\icmltitlerunning{Forget Less, Generalize More: Unifying Temporal and Structural Adaptation for Dynamic Graphs}

\begin{document}

\twocolumn[
\icmltitle{Forget Less, Generalize More: Unifying Temporal and Structural Adaptation for Dynamic Graphs}

  % It is OKAY to include author information, even for blind submissions: the
  % style file will automatically remove it for you unless you've provided
  % the [accepted] option to the icml2026 package.

  % List of affiliations: The first argument should be a (short) identifier you
  % will use later to specify author affiliations Academic affiliations
  % should list Department, University, City, Region, Country Industry
  % affiliations should list Company, City, Region, Country

  % You can specify symbols, otherwise they are numbered in order. Ideally, you
  % should not use this facility. Affiliations will be numbered in order of
  % appearance and this is the preferred way.
  
% \icmlsetsymbol{equal}{*}

\begin{icmlauthorlist}
\icmlauthor{Qian Chang}{UoA}
\icmlauthor{Ciprian Doru Giurcaneanu}{UoA}
\icmlauthor{Runsong Jia}{UTS}
\icmlauthor{Xia Li}{CCNU}
\icmlauthor{Guoping Hu}{UoA}
\icmlauthor{Xiufeng Cheng}{CCNU}
\icmlauthor{Jinqing Yang}{CCNU}
\icmlauthor{Mengjia Wu}{UTS}
\icmlauthor{Yi Zhang}{UTS}

\end{icmlauthorlist}

\icmlaffiliation{UoA}{University of Auckland, Auckland, New Zealand}
\icmlaffiliation{UTS}{University of Technology Sydney, Sydney, Australia }
\icmlaffiliation{CCNU}{Central China Normal University, Wuhan, China}

\icmlcorrespondingauthor{Ciprian Doru Giurcaneanu}{c.giurcaneanu@auckland.ac.nz}
% \icmlcorrespondingauthor{Firstname2 Lastname2}{first2.last2@www.uk}

% You may provide any keywords that you
% find helpful for describing your paper; these are used to populate
% the "keywords" metadata in the PDF but will not be shown in the document
\icmlkeywords{Machine Learning, ICML}

\vskip 0.3in

]

% this must go after the closing bracket ] following \twocolumn[ ...

% This command actually creates the footnote in the first column
% listing the affiliations and the copyright notice.
% The command takes one argument, which is text to display at the start of the footnote.
% The \icmlEqualContribution command is standard text for equal contribution.
% Remove it (just {}) if you do not need this facility.

\printAffiliationsAndNotice{}  % leave blank if no need to mention equal contribution
%\printAffiliationsAndNotice{\icmlEqualContribution} % otherwise use the standard text.

\begin{abstract}
Representation learning on dynamic graphs requires capturing complex dependencies that evolve across both time and structure. Existing approaches typically adopt fixed temporal decay schemes or predetermined structural propagation depths, limiting their ability to generalize across graphs with diverse interaction frequencies and topological characteristics. We propose \ours~(\our), a unified framework that maintains a retentive representation state encoding both temporal memory and structural context. \our~introduces two key components: (i) a retentive state with dual-scale adaptation that jointly models temporal dynamics and structural propagation within a single recurrent formulation, and (ii) adaptive decay kernels with learnable time-sensitivity parameters that automatically balance short-term responsiveness and long-term retention based on the underlying interaction patterns. We provide theoretical analysis establishing the equivalence between event-wise parallel aggregation and efficient recurrent state updates, as well as stability and boundedness guarantees for the learned dynamics. Extensive experiments on 14 real-world benchmarks demonstrate that \our~consistently achieves state-of-the-art performance on both link prediction and node classification tasks, with strong generalization across transductive and inductive settings. The implementation code is available at \href{https://anonymous.4open.science/r/DSRD}{[\faGithub Code]}.
\end{abstract}

\section{Introduction}
\label{section-1}
Dynamic graphs (also referred to as temporal graphs) offer a principled abstraction for modeling real-world systems whose entities and relations evolve over time~\cite{holme2012temporal,feng2025comprehensive,martinez2016survey}, enabling structured reasoning over temporal interactions in domains such as social networks~\cite{deng2019learning}, traffic systems~\cite{li2022dmgan}, and knowledge graphs~\cite{wang2024large}. However, learning effective representations from such data remains challenging due to the tight entanglement between temporal dynamics and evolving topological structure, which constrains both the expressiveness and robustness of existing dynamic graph learning models~\cite{feng2025comprehensive, gravina2024deep}.
\begin{figure}[th]
    \centering
    \includegraphics[width=\linewidth]{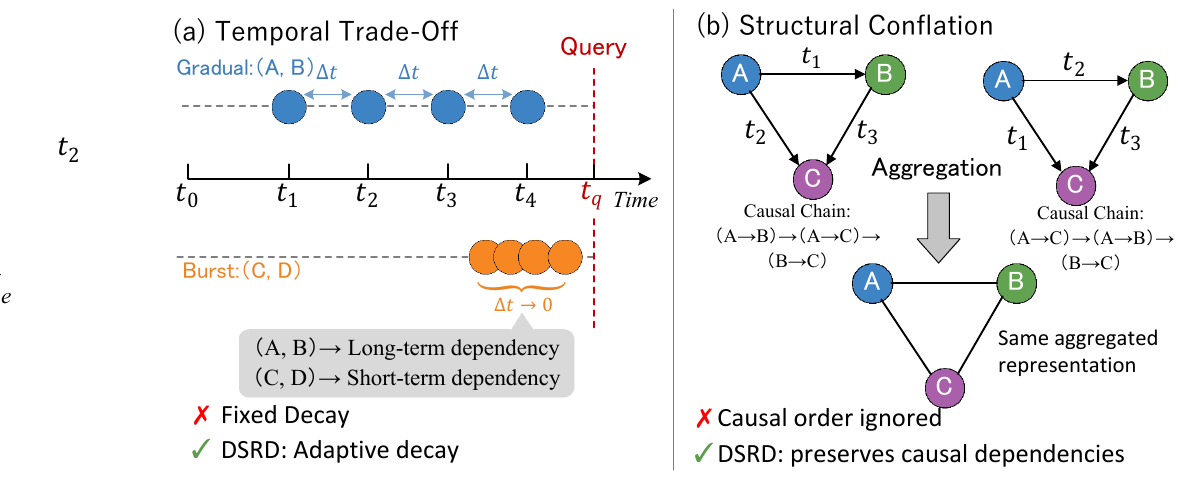}
    \caption{An illustration of the challenges in dynamic graph learning. (a) Temporal Trade-Off: Gradual $(A,B)$ and burst $(C,D)$ interaction patterns require distinct long-term and short-term dependencies, respectively, yet fixed decay cannot distinguish them. (b) Structural Conflation: Different causal orderings in temporal walks yield identical representations under standard aggregation. }
    \label{fig:motivation}
\end{figure}

Recent advances in dynamic graph learning span memory-based architectures~\cite{rossi2020tgn,kumar2019jodie}, attention-driven temporal encoders~\cite{yu2023dygformer,xu2020tgat}, and temporal walk-based structural models~\cite{wang2021cawn,lu2024tpnet}. While these approaches have achieved notable empirical success, they predominantly rely on a single temporal scale or a fixed structural propagation mechanism~\cite{nguyen2018continuous,holme2012temporal}. As a result, they struggle to adapt to heterogeneous dynamic regimes, where both temporal patterns and structural dependencies vary substantially across datasets (e.g., sparse vs. dense graphs) and downstream tasks (e.g., node classification vs. link prediction)~\cite{feng2025comprehensive}.

The first challenge is the temporal trade-off, arising from the coexistence of heterogeneous temporal regimes within and across dynamic graphs. For instance, as illustrated in Figure~\ref{fig:motivation}(a), gradual interaction patterns encode long-term relational dependencies~\cite{yu2023dygformer}, whereas bursty interactions reflect short-lived but informative events. Real-world dynamic graphs often exhibit both behaviors simultaneously, yet most existing models rely on fixed decay or uniform temporal encoding, preventing adaptive balancing between short-term and long-term dependencies~\cite{cong2023graphmixer,wang2021cawn,chung2025between}.

The second challenge is structural conflation, which concerns the loss of causal walks in temporal propagation. As shown in Figure~\ref{fig:motivation}(b), two walks with identical node sequences but different edge timestamps represent distinct information flow patterns (order of $A\rightarrow B$ and $A\rightarrow C$). However, conventional aggregation mechanisms treat them identically~\citep{lu2024tpnet}, collapsing high-order temporal dependencies that can be critical for downstream prediction~\cite{jin2022graph, ur2025primer}.

To address these challenges, we propose \textbf{D}ual-\textbf{S}cale \textbf{R}etentive \textbf{D}ynamics (\textbf{DSRD}), a unified framework that jointly models temporal and structural evolution within a shared  retentive state. Rather than treating temporal dependencies and structural propagation as independent modules, DSRD couples them through a shared state representation, allowing the model to learn graph-level trade-offs between short-term and long-term dependencies. Specifically, DSRD introduces two key components: (i) a unified retentive state with dual-scale adaptation that captures temporal dynamics through learnable decay kernels while simultaneously propagating signals along temporal walks for structural diffusion, and (ii) a gating mechanism with learnable time-sensitivity parameters that adaptively balances short-term injections and long-term retention based on the underlying interaction patterns. We provide theoretical analysis showing that the retentive dynamics admit a state-space formulation and satisfy stability bounds. Experiments on 14 benchmarks show that DSRD achieves state-of-the-art performance on dynamic link prediction and node classification under both transductive and inductive settings.

Our contributions are as follows:
\begin{itemize}
    \item We propose DSRD, a retentive framework that unifies temporal memory and structural diffusion through dual-scale adaptation, with learnable decay kernels that balance short-term and long-term dependencies.
    \item We provide supporting theoretical insights into the retentive state dynamics,including a closed-form historical expansion and boundedness properties that justify the stability of the learned representations under standard modeling assumptions.
    \item We conduct extensive experiments on 14 benchmark datasets, demonstrating consistent improvements over state-of-the-art methods under both transductive and inductive settings across discrete-time and continuous-time dynamic graphs.
\end{itemize}
\section{Preliminaries}
\label{sec:preliminaries}

\subsection{Problem Formulation}
\label{sec:problem}

\paragraph{Dynamic Graphs.}
We consider a dynamic graph
$\mathcal{G} = (\mathcal{V}, \mathcal{E}, X, \Phi)$,
where $\mathcal{V}$ denotes the node set and
$\mathcal{E} = \{(u,v,t)\}$ is a set of interaction events with timestamps $t \in \mathbb{R}_{\ge 0}$.
Each node $v \in \mathcal{V}$ is associated with a feature vector
$x_v \in X$, and each event $e=(u,v,t_e)$ carries an edge attribute
$\phi_e \in \Phi$.
Depending on the temporal granularity, dynamic graphs can be modeled in two principal paradigms: \emph{Discrete-Time Dynamic Graphs (DTDGs)}, which represent temporal evolution as a sequence of graph snapshots aggregated over fixed time intervals, and \emph{Continuous-Time Dynamic Graphs (CTDGs)}, which capture fine-grained dynamics through asynchronous event streams where each edge is associated with an exact timestamp.
At any reference time $t$, the observable snapshot is the causal subgraph
$\mathcal{G}_{t} = (\mathcal{V}, \mathcal{E}_{t}, X, \Phi_{t})$, where $\mathcal{E}_{t}=\{(u,v,t_e) \in \mathcal{E} \mid t_e \le t$\} .

\paragraph{Temporal Walks.}
High-order structural context can be characterized through \emph{temporal walks}, which are sequences of edges that respect the chronological ordering of interactions. To quantify the cumulative influence propagated along such walks, we introduce a diffusion score matrix $A_t \in \mathbb{R}^{|\mathcal{V}| \times |\mathcal{V}|}$, where entry $A_t(i,j)$ measures the aggregated weight of all time-respecting walks from node $i$ to node $j$ up to time $t$. Rather than serving as explicit node representations, these scores act as structural routing coefficients that govern how temporally localized signals propagate across the graph.

\paragraph{Learning Objective.}
Given a node $v$ at time $t$, the goal is to learn a representation
$
Z_v(t) = f_\Theta(v, \mathcal{G}_{t})
$ with all trainable parameters $\Theta$
that encodes all relevant structural and temporal information up to time $t$. The learned representation can be used for tasks such as \emph{link prediction}, where the model estimates the likelihood of a future interaction $(u,v,t^+)$, and \emph{node classification}, where the model predicts the ground-truth  label of $v$ at time $t^+$. Both transductive and inductive evaluation settings are considered.

\subsection{Retentive State}
\label{sec:retentive_state}

Given the causal subgraph $\mathcal{G}_{t}$, each node $j$ at reference time $t$ is associated with a feature vector $x_j(t) \in \mathbb{R}^{d_x}$. We apply shared linear projections
\begin{equation}
Q_t^{j} = x_j(t) W_Q,\;
K_t^{j} = x_j(t) W_K,\;
V_t^{j} = x_j(t) W_V,
\label{eq:qkv_ret}
\end{equation}
where $W_Q, W_K, W_V \in \mathbb{R}^{d_x \times d}$ are learnable parameters.

To any interaction event $(i, j, t_e) \in \mathcal{E}_{t}$, we associate the rank-one second-order increment $(K_{t}^i)^{\!\top} V_{t}^i \in \mathbb{R}^{d \times d}$. This outer-product form encodes pairwise feature interactions emitted by node $i$ at $t$ for both CTDGs and DTDGs, serving as the atomic unit of information injection. A straightforward approach aggregates all historical increments uniformly into the matrix
$
\mathcal{H}_t^j =
\sum_{(i,j,t_e) \in \mathcal{E}_{t}} (K_{t_e}^i)^{\!\top} V_{t_e}^i,
\label{eq:Ht_ret}
$
from which the node-level output is computed via query contraction
$
O_t^j = Q_t^j \mathcal{H}_t^j
$~\cite{zhai2021aft,peng2023rwkv,sun2023retnet, gu2024mamba}.
This formulation treats all past interactions in a flat and parallel manner, without modeling (i) how information is temporally retained and (ii) how it diffuses structurally. To disentangle these mechanisms, we first introduce weighted instantaneous injections and layer-wise temporal walks.

\paragraph{Instantaneous Injection.}
For the target node $j$, we define its instantaneous injection from an arbitrary source node $i$, at time $t$, as a weighted aggregation of source messages:
\begin{equation}
\Delta_t^i =
\sum_{(i,j,t_e) \in \mathcal{E}_{t}} \omega(t,(i,j,t_e))\, (K_{t_e}^i)^{\!\top} V_{t_e}^i,
\label{eq:Delta_ret}
\end{equation}
where $\omega(t,(i,j,t_e)) \ge 0$ is a causal weight that modulates the contribution of each historical interaction based on temporal proximity and semantic relevance.  This quantity represents the localized signal emitted by source node $i$.

\paragraph{Temporal Walk Diffusion.}
Let $T_t \in \mathbb{R}^{|\mathcal{V}| \times |\mathcal{V}|}$ be the instantaneous transition matrix induced by $\mathcal{E}_t$. Given a maximum diffusion depth $K$, we formalize recursive walk diffusion kernels $\{ A_t^{(\ell)} \}_{\ell = 0}^{K}$ as
\begin{equation}
A_t^{(\ell)} = A_{t^-}^{(\ell)} + A_{t^-}^{(\ell-1)} T_t,
\quad \text{for } t, \ell > 0,
\label{eq:A_rec_ret}
\end{equation}
with $A_t^{(0)} = I$ for all $t$, and $A_0^{(\ell)} = 0$ for $\ell > 0$. Here $t^-$ denotes the most recent event time right before $t$. Recall that $A_t^{(\ell)}(i,j)$ denotes the aggregated weight of length-$\ell$ time-respecting walks from $i$ to $j$ up to time $t$. To gain more insight into the walk matrix, we provide a closed-form expansion of $A_t^{(\ell)}$ in Appendix~\ref{proofs} (see Lemma~\ref{lem:walk_expansion}).

\paragraph{Retentive State Dynamics.}
To couple short-term injections with long-term structural diffusion, we define a
latent retentive state $S_t^{j,(\ell)} \in \mathbb{R}^{d\times d}$ for each node $j$, diffusion depth $\ell > 0$, and time $t > 0$:
\begin{equation}
S_t^{j,(\ell)} =
a_t S_{t^-}^{j,(\ell)}
+
b_t
\bar{\Delta}_{t}^{j,(\ell)}
\qquad
S_0^{j,(\ell)} = 0,
\label{eq:S_ret}
\end{equation}
where $a_t,b_t \in [0,1]$ control long-term retention and short-term injection
strength, respectively. Note that $\bar{\Delta}_{t}^{j,(\ell)} = 
\sum_{i\in\mathcal{V}}A_{t}^{(\ell)}(i,j)\Delta_{t}^{i}$.
Although the summation is written over all nodes, only temporally reachable nodes
contribute non-zero terms, since $A_t^{(\ell)}(i,j)=0$ whenever no temporal
walk of length $\ell$ exists from $i$ to $j$.

\begin{proposition}
\label{prop:unified_state_rw}
Let $\mathcal{T}_{t}$ denote the ordered set of event timestamps up to $t$.
For each node $j$ and diffusion depth $\ell$, the retentive state admits the
closed-form expansion
\begin{equation}
S_t^{j,(\ell)} =
\sum_{\tau\in\mathcal{T}_{t}}
\left(
\prod_{r\in\mathcal{T}_{(\tau,t]}} a_r
\right)
b_\tau
\bar{\Delta}_{t}^{j,(\ell)}
\label{eq:S_closed_ret}
\end{equation}
where $\mathcal{T}_{(\tau,t]}=\{r\in\mathcal{T}_{t} | \tau<r\leq t\}$.
\end{proposition}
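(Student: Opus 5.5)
The plan is a direct unrolling of the linear recursion \eqref{eq:S_ret}, formalized as induction over the ordered event timestamps $\mathcal{T}_t=\{\tau_1<\tau_2<\dots<\tau_n\}$, with $\tau_n\le t$. One point about notation first. The recursion injects the term $b_\tau\bar{\Delta}_\tau^{j,(\ell)}$ at each event time $\tau$. The summand in \eqref{eq:S_closed_ret} should therefore be read as $b_\tau\bar{\Delta}_{\tau}^{j,(\ell)}$, indexed by the summation variable $\tau$. With a fixed subscript $t$ the identity would fail as soon as the diffusion kernels $A^{(\ell)}$ change between events. I prove the statement in this reading. The state only changes at event times, and $t^-$ is the previous event time, so I identify $S_t$ with $S_{\tau_n}$ and use $S_0^{j,(\ell)}=0$ as the base value before $\tau_1$.

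\textbf{Base case} ($n=1$). The recursion gives $S_{\tau_1}=a_{\tau_1}\cdot 0+b_{\tau_1}\bar{\Delta}_{\tau_1}^{j,(\ell)}$. This matches the closed form, because $\mathcal{T}_{(\tau_1,\tau_1]}=\emptyset$ and the empty product equals $1$.

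\textbf{Inductive step.} Assume the expansion holds at $\tau_{n-1}$. Then
\[
S_{\tau_n}=a_{\tau_n}\sum_{\tau\le\tau_{n-1}}\Bigl(\prod_{r\in\mathcal{T}_{(\tau,\tau_{n-1}]}}a_r\Bigr)b_\tau\bar{\Delta}_\tau^{j,(\ell)}+b_{\tau_n}\bar{\Delta}_{\tau_n}^{j,(\ell)} .
\]
For every $\tau<\tau_n$ we have the disjoint decomposition $\mathcal{T}_{(\tau,\tau_n]}=\mathcal{T}_{(\tau,\tau_{n-1}]}\cup\{\tau_n\}$. So multiplying by $a_{\tau_n}$ extends each retention product to $\prod_{r\in\mathcal{T}_{(\tau,\tau_n]}}a_r$. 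The standalone injection term is exactly the $\tau=\tau_n$ summand, again with an empty product. Since the $a_r$ are scalars, they commute with the matrix-valued $\bar\Delta$, and linearity lets the factor $a_{\tau_n}$ pass inside the sum.

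\textbf{Main obstacle.} The only delicate part is bookkeeping rather than analysis. Three things need care: the convention for $t^-$ at the first event; the identification of an arbitrary query time $t$ with the last event time $\le t$ (state constancy between events); and the indexing issue above. Once these conventions are fixed, the induction is immediate.
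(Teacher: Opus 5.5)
Your proof is correct and matches the paper's argument: induction over the ordered event times, a base case $S_{\tau_1}^{j,(\ell)}=b_{\tau_1}\bar{\Delta}_{\tau_1}^{j,(\ell)}$ via the empty-product convention, and an inductive step that absorbs $a_t$ into each retention product and adds the new $\tau=t$ summand. Your reading of the summand as $b_\tau\bar{\Delta}_{\tau}^{j,(\ell)}$ is also what the paper's proof actually uses, so the subscript $t$ in the displayed statement is a typo; your explicit convention for query times falling between events tidies up a point the paper leaves implicit.
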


The proof of Proposition~\ref{prop:unified_state_rw} is provided in Appendix~\ref{app:proof_prop}. Note that~\cref{eq:S_closed_ret} unifies temporal retention and high-order structural diffusion within a single recurrent state. For each diffusion depth $\ell$, the resulting readout $O_t^{j,(\ell)} = Q_t^j S_t^{j,(\ell)}$ captures a temporally filtered and structurally diffused summary of all historical second-order injections. 
\section{\ours}
\label{sec:method}

In this section, we instantiate the DSRD framework with two core mechanisms: adaptive temporal retention and structural diffusion (see Section~\ref{sec:dsrd}). We additionally describe auxiliary components including time encoding, feature embedding, and the overall block architecture (see Section~\ref{sec:net}).

\begin{figure*}[!t]
\centering
    \includegraphics[width=0.94\linewidth]{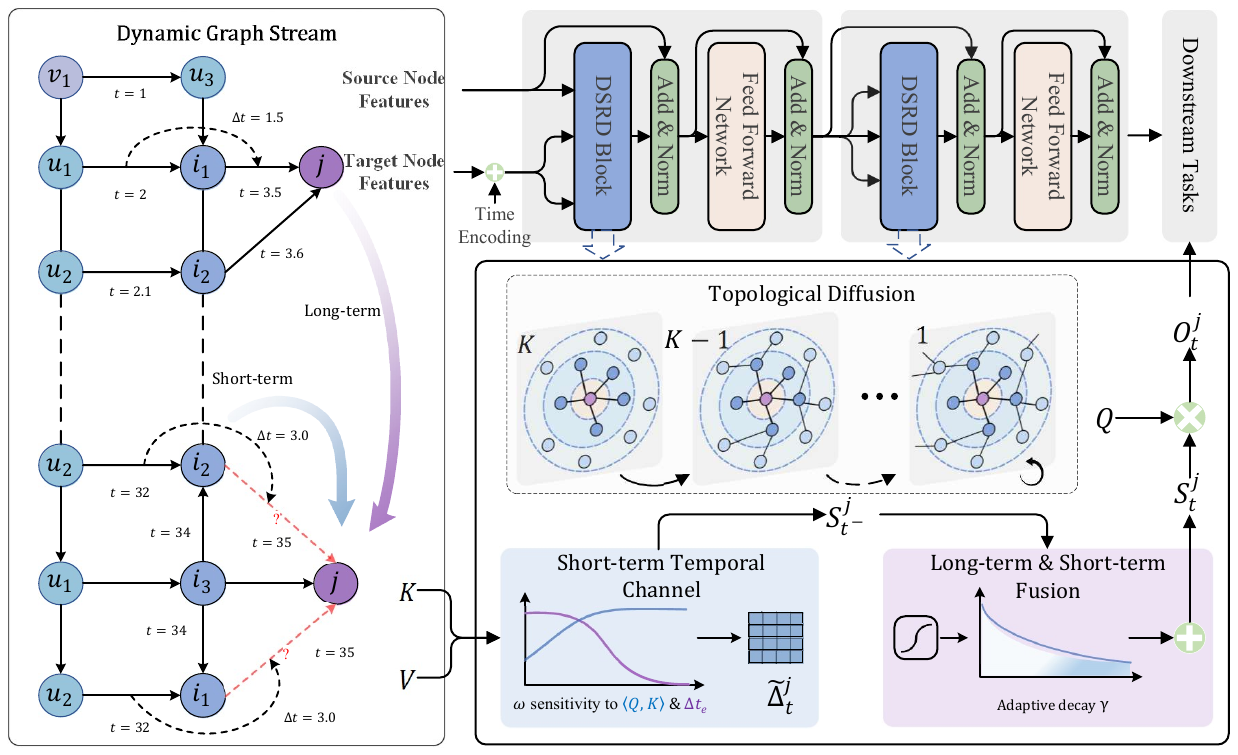}
    \caption{Overview of DSRD. 
The left panel shows a dynamic graph stream centered on target node $j$, where $i$, $u$, and $v$ denote one-hop, two-hop, and three-hop temporal neighbors. 
The lower-right part illustrates the three core operations of DSRD: short-term temporal injection, topological diffusion over time-respecting walks, and gated temporal (long-term vs. short-term) fusion. 
The upper-right part shows the stacked DSRD architecture with feed-forward layers for downstream tasks.}
    \label{fig:archi}
\end{figure*}

\subsection{Adaptive Dual-Scale Retentive Diffusion}
\label{sec:dsrd}

We model dynamic graph representation learning as a retentive diffusion process in which long-term temporal memory and high-order structural propagation are jointly maintained in a unified latent state as shown in~\cref{fig:archi}.
For each target node $j$ and diffusion layer $\ell$, the model maintains a second-order latent state
$S_{t}^{j,(\ell)} \in \mathbb{R}^{d \times d}$,
which is updated at any time $t$ when an interaction occurs.  Following the
unified retentive formulation established in Proposition~\ref{prop:unified_state_rw}, we introduce our strategies:

\paragraph{Short-Term Injection with Adaptive Decay.}
To separate instantaneous event responses from long-term structural retention, we instantiate the short-term injection $\Delta_t^i$ from Equation~\eqref{eq:Delta_ret} by specifying the causal weight as
\begin{equation}
\omega\big(t,(i,j,t_e)\big)
=
\underbrace{\sigma\big(\langle Q_t^j, K_{t_e}^i \rangle\big)}_{\text{query--key attention}}
\cdot
\underbrace{\exp\!\big(-\lambda(\Delta t_e)^{\alpha}\big)}_{\text{adaptive temporal decay}}.
\label{eq:short-decay}
\end{equation}
Here, the attention factor first computes cosine similarity $\langle Q_t^j, K_{t_e}^i \rangle \in (-1,1)$ between query and key vectors, then applies the sigmoid $\sigma(\cdot)$ to squash the result into $(0,1)$ as a context-aware gating coefficient. The second factor introduces adaptive temporal decay over the elapsed time $\Delta t_e = t - t_e$, where $\lambda > 0$ is a learnable decay rate and $\alpha \in (0,1)$ controls temporal sensitivity. This exponential forgetting enables adaptation to non-stationary dynamics.

\paragraph{Gated Long-term and Short-term Fusion.}
The short-term injection $\bar \Delta_t^{j,(\ell)}=\sum_{i \in \mathcal{V}}A_t^{(\ell)}(i,j)\Delta_t^i$ captures instantaneous event signals,
while the accumulated state $S_{t^-}^{j,(\ell)}$ preserves long-range historical
dependencies.
To balance these two components, we introduce a learnable gating mechanism.
Formally, we take $a_t = \gamma^{(\ell)}$ and $b_t = 1 - \gamma^{(\ell)}$ in Equation~\eqref{eq:S_ret}, where $\gamma^{(\ell)} \in (0,1)$ is a learnable gate, yielding the convex combination
\begin{equation}
S_t^{j,(\ell)}
=
\gamma^{(\ell)}\, S_{t^-}^{j,(\ell)}
+
(1-\gamma^{(\ell)})\, \bar \Delta_t^{j,(\ell)}.
\label{eq:retentive_update}
\end{equation}
A higher $\gamma^{(\ell)}$ retains more historical information, favoring long-term dependencies, while a lower $\gamma^{(\ell)}$ allows the model to respond more rapidly to recent events.

\begin{theorem}
\label{thm:stability}
Suppose the aggregated increments are uniformly bounded as $\|\sum_{(i,j,t_e)\in\mathcal{E}_t} \Delta_t^i\| \le M$ for all $t$, $j$, and $\ell$  where $\|\cdot\|$ denotes the Frobenius norm. Then:

(\romannumeral1) For all $t \ge 0$,
\begin{equation}
\|S_t^{j,(\ell)}\| \le \big(1 - (\gamma^ {(\ell)} )^{n}\big) M \le M,
\end{equation}
where $n = |\mathcal{T}_{t}|$ is the number of event timestamps up to $t$.

(\romannumeral2) The readout $O^{j,(\ell)}_t = Q^j_t S^{j,(\ell)}_t$ is uniformly bounded:
\begin{equation}
    \|O^{j,(\ell)}_t\| \leq LB_x \cdot M,
\end{equation}
where $L$ is the norm bound on the projection matrices and $B_x$ is the bound on node features.
\end{theorem}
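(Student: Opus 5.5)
Part (i) comes from unrolling the recursion~\eqref{eq:retentive_update} with Proposition~\ref{prop:unified_state_rw}, specialised to $a_r=\gamma^{(\ell)}$ and $b_\tau=1-\gamma^{(\ell)}$. Since the gate is constant across events, the product $\prod_{r\in\mathcal{T}_{(\tau,t]}} a_r$ equals $(\gamma^{(\ell)})^{k}$, where $k$ is the number of event times strictly after $\tau$ and up to $t$. Enumerating $\mathcal{T}_t=\{\tau_1<\dots<\tau_n\}$, the state becomes
\begin{equation*}
S_t^{j,(\ell)}=\sum_{m=1}^{n}(\gamma^{(\ell)})^{n-m}(1-\gamma^{(\ell)})\,\bar\Delta_{\tau_m}^{j,(\ell)} .
\end{equation*}
Here I use the injection at time $\tau_m$, which is what the recursion produces; the closed form in the proposition should be read this way. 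I will interpret the hypothesis as giving $\|\bar\Delta_{\tau}^{j,(\ell)}\|\le M$ for every event time $\tau$, every $j$ and every $\ell$. The statement asserts this ``for all $t$, $j$, $\ell$'', so the bound is meant to apply to the diffused injection entering the update. If necessary, I will say explicitly that the $A^{(\ell)}_t$-weighted aggregate is covered by the assumption.

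With that reading, the triangle inequality gives
\begin{equation*}
\|S_t^{j,(\ell)}\|\le (1-\gamma^{(\ell)})M\sum_{k=0}^{n-1}(\gamma^{(\ell)})^k=(1-(\gamma^{(\ell)})^n)M ,
\end{equation*}
using the finite geometric sum. The second inequality, $(1-(\gamma^{(\ell)})^n)M\le M$, holds because $\gamma^{(\ell)}\in(0,1)$. When $n=0$ we have $S_t=S_0=0$, which is consistent with the bound. As a cross-check I will also prove this by induction on $n$: if $\|S_{t^-}\|\le(1-\gamma^{n-1})M$, then $\|S_t\|\le\gamma(1-\gamma^{n-1})M+(1-\gamma)M=(1-\gamma^n)M$. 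This route avoids the closed form altogether.

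For part (ii), I use the Frobenius submultiplicativity $\|Q_t^jS_t^{j,(\ell)}\|\le\|Q_t^j\|\,\|S_t^{j,(\ell)}\|$. Writing $Q_t^j=x_j(t)W_Q$ gives $\|Q_t^j\|\le\|x_j(t)\|\,\|W_Q\|\le B_xL$. Combining this with (i) yields $\|O_t^{j,(\ell)}\|\le LB_x(1-(\gamma^{(\ell)})^n)M\le LB_xM$.

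The main obstacle is the mismatch between the hypothesis, which is stated for $\sum\Delta_t^i$ without the walk weights, and the quantity $\bar\Delta$ that actually enters the recursion. If the hypothesis is read literally, I would bound $\|\bar\Delta\|$ by $\max_i\|A^{(\ell)}_t(i,\cdot)\|$ times $M$, which requires bounded or normalised walk weights, for example a row-stochastic $T_t$. To avoid this, I will state the interpretation explicitly rather than introduce such a factor.
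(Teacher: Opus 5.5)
Your proposal is correct and matches the paper's proof. The paper runs exactly your ``cross-check'' induction on the scalar recursion $s_k \le \gamma^{(\ell)} s_{k-1} + (1-\gamma^{(\ell)})M$ from $s_0=0$ (your unrolled geometric sum is just its explicit solution), part (ii) is the same submultiplicativity step with $\|Q_t^j\|\le LB_x$, and your reading of the hypothesis as a bound on the injection that actually enters the update is also what the paper does, since its proof uses $\|\bar\Delta_{\tau_k}^{j,(\ell)}\|\le M$ directly, justified via Lemma~\ref{lem:bounded_increments}.
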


We defer the proof of~\cref{thm:stability} to Appendix~\ref{app:proof_stability}. The boundedness assumption on aggregated increments is justified in Lemma~\ref{lem:bounded_increments} from Appendix~\ref{app:proof_stability}. 
Theorem~\ref{thm:stability} provides key guarantees for DSRD: (i) uniform boundedness prevents gradient explosion during training, (ii) output contraction ensures robustness to input perturbations.

\paragraph{Topological Diffusion via Temporal Walks.}
We note that the gated update in Equation~\eqref{eq:retentive_update} requires computing $\bar{\Delta}_t^{j,(\ell)} = \sum_{i \in \mathcal{V}} A_t^{(\ell)}(i,j) \Delta_t^i$, which involves the $\ell$-hop walk matrix $A_t^{(\ell)} \in \mathbb{R}^{|\mathcal{V}| \times |\mathcal{V}|}$. Explicitly maintaining this matrix for each layer $\ell$ incurs $\mathcal{O}(K|\mathcal{V}|^2)$ memory overhead, which is prohibitive for large-scale graphs. To circumvent this, we redefine the short-term injection as $\tilde{\Delta}_t^{j,(\ell)} = \sum_{(i,j,t_e) \in \mathcal{E}_t} \Delta_t^{i,(\ell)}$ with layer-specific projections, aggregating contributions from all incident edges at time $t$. Substituting into Equation~\eqref{eq:retentive_update}, the state update becomes:
\begin{equation}
S_t^{j,(\ell)}
=
\gamma^{(\ell)}\, S_{t^-}^{j,(\ell)}
+
(1-\gamma^{(\ell)})\, \tilde \Delta_t^{j,(\ell)}.
\label{eq:retentive_update_2}
\end{equation}

Finally, the readout for each layer is obtained via query contraction $O_t^{j,(\ell)} = Q_t^{j,(\ell)} S_t^{j,(\ell)}$, where we extend the projection defined in Equation~\eqref{eq:qkv_ret} to each layer $\ell$ by applying layer-specific parameters. The state is further updated by observing that the temporal walk matrix satisfies a recursive structure: when a new interaction $(i, j, t_e)$ arrives, existing $(\ell {-}1)$-step walks ending at $i$ extend to $\ell$-step walks ending at $j$. This motivates a layer-wise state propagation scheme inspired by~\citet{lu2024tpnet}, which implicitly encodes the walk matrix without explicit materialization (an illustrative example is provided in Appendix~\ref{appendix:sketch}). Specifically, we propagate information across walk lengths:
\begin{equation}
S_t^{j,(\ell)} \gets
S_{t}^{j,(\ell)}
+
\sum_{(i,j,t_e)\in\mathcal{E}_t}
\psi_{ij}^{(\ell)}(t_e)\, S_{t}^{i,(\ell-1)},\; \ell=K,\ldots,2,
\label{eq:interlayer_propagation}
\end{equation}
where $\psi_{ij}^{(\ell)}(t_e) = \exp(-\ell \delta^{(\ell)} \Delta t_e)$ is a trainable attenuation with a learnable parameter $\delta^{(\ell)}>0$.The readout precedes topological diffusion to preserve causality, while the short-term injection $\tilde{\Delta}_t^{j,(\ell)}$, depending only on immediate neighbors, remains unaffected, as shown in in~\cref{alg:dsrd}.

\begin{algorithm}[!tb]
  \caption{Dual-Scale Retentive Diffusion for DSRD}
  \label{alg:dsrd}
  \begin{algorithmic}
    \STATE {\bfseries Input:} Dynamic subgraph $\mathcal{G}_{ t} = (\mathcal{V}, \mathcal{E}_{ t})$, walk depth $K$.
    \STATE Initialize retentive states $S_0^{j,(\ell)}$ for all $j \in \mathcal{V}$, $\ell \in \{1, \ldots, K\}$.
    \FOR{each event $(i, j, t_e) \in \mathcal{E}_{ t}$}
      \FOR{$\ell = 1$ \textbf{to} $K$}
          \STATE Compute projections via Eq.~\eqref{eq:qkv_ret}.
          \STATE Compute short-term injection $\Delta_t^{i,(\ell)}$ via Eq.~\eqref{eq:Delta_ret}.
        \STATE Fuse short-term and long-term signals via Eq.~\eqref{eq:retentive_update_2}.
      \ENDFOR
      \STATE Obtain output $O_t^{j,(K)} = Q_t^j S_t^{j,(K)}$.
        \FOR{$\ell = K$ \textbf{down to} $2$}
        \STATE Update topological diffusion via Eq.~\eqref{eq:interlayer_propagation}.
      \ENDFOR
    \ENDFOR
    \STATE {\bfseries Output:} Final node representations $\{O_t^{j,(K)}\}_{j \in \mathcal{V}}$
  \end{algorithmic}
\end{algorithm}

\subsection{Network Composition and Optimization Objective}
\label{sec:net}
\paragraph{Auxiliary Augmentation.}
To model both transient interactions and long-term dependencies in dynamic graphs, \our{} is organized as a hierarchical composition of model layers. Specifically,  for each event $(u, v, t_e)$, we compute a time encoding~\cite{cong2023graphmixer} $\phi_t = \cos(\Delta t_e\, W_{t_e})$ and fuse it with the edge attribute to obtain $\tilde{\phi}_e = (\phi_e + \phi_t) W_e$, where $W_{t_e}$ and $W_e$ are trainable parameters. The temporal-aware edge features are injected into node representations by augmenting the projections:
\begin{equation}
\tilde{K}_{t_e}^i = K_{t_e}^i + \tilde{\phi}_e W_K, \quad \tilde{V}_{t_e}^i = V_{t_e}^i + \tilde{\phi}_e W_V.
\end{equation}
These augmented projections replace $(K_{t_e}^i, V_{t_e}^i)$ in Equation~\eqref{eq:Delta_ret}. As a result, the increment $(\tilde{K}_{t_e}^i)^\top \tilde{V}_{t_e}^i$ encodes not only the node features but also edge-specific temporal signals, enabling the model to differentiate between interactions that share the same target nodes but occur at different times or carry distinct attributes.

\paragraph{Model Architecture.} 
Within the $\ell$-th layer, we extend the retentive state to $H$ parallel attention heads, where each head $h \in \{1,\ldots,H\}$ maintains independent parameters $(\omega^{h,(\ell)}, \gamma^{h,(\ell)}, \psi^{h,(\ell)})$ as defined in Section~\ref{sec:dsrd}. The node representation $Z^{(\ell)}$ for each node $j$ is updated as
\begin{equation}
Z^{(\ell)} = X^{(\ell)} 
+ \mathrm{LN}\!\Big(\mathrm{Concat}(\, g_\ell^h(X^{(\ell)},S^{j,h,(\ell)}))\Big),
\end{equation}
where $g_\ell^h(\cdot)$ denotes the \our{} operator for head $h$, $\mathrm{Concat}$ denotes concatenation across heads, and $\mathrm{LN}$ represents layer normalization~\cite{ba2016layernrom} applied across the concatenated feature channels.
Following the core \our{} operation, a feed-forward sublayer~\cite{vaswani2017attention} refines the representation as
\begin{equation}
X^{(\ell+1)} =
Z^{(\ell)}
+
\mathrm{GELU}\!\big(\mathrm{LN}(Z^{(\ell)})\,W_1^{(\ell)}\big)\,W_2^{(\ell)},
\end{equation}
where $\mathrm{GELU}$ denotes the Gaussian Error Linear Unit activation~\cite{hendrycks2016GELU} and $W_1^{(\ell)},W_2^{(\ell)}$ are learnable parameters. 
The alternating application of multi-head retentive propagation and feed-forward refinement enables each layer to encode both fine-grained temporal signals and higher-order structural context.

\paragraph{Normalization.}
To stabilize the numerical flow, we apply several normalization techniques that preserve scale-invariant properties: \romannumeral1) The readout $O_t^{j,{(\ell)}}$ for each head $h$ is scaled as $Q_t^{j,{(\ell)}}S_t^{j,{(\ell)}} / \sqrt{d}$; \romannumeral2) The structural propagation weights $\psi_{ij}^{(\ell)}(t)$ and temporal attention weights $\omega_{ij}$ are each normalized over the corresponding neighborhood as $\tilde{\psi}_{ij}^{(\ell)}(t) = \psi_{ij}^{(\ell)}(t) / \max(\sum_{i'} \psi_{i'j}^{(\ell)}(t), 1)$ and $\tilde{\omega}_{ij} = \omega_{ij} / \max(\sum_{i'} |\omega_{i'j}|, 1)$; \romannumeral3) All time intervals $\Delta t$ involved in decay functions are transformed using $\log(1 + \Delta t)$ prior to exponentiation, improving numerical stability across various time scales (from days to years).

After stacking $K$ \our{} layers, the final node embedding is obtained by mean pooling over the $H$ heads, $h_t^{u}=\mathrm{MEAN}_{h=1}^{H}(O_t^{u,h,(K)})$. 
For link prediction, the likelihood of a future interaction $(u,v,t^+)$ is computed as 
$p_t^{u,v}=\mathrm{MLP}([h_t^u,h_t^v])$. 
For node classification, the label probability is given by $\hat{y}_t^u = \sigma(h_t^u W_{\mathrm{cls}})$, where $W_{\mathrm{cls}}$ is a learnable weight matrix.
\section{Experiments}
\label{section-5}
In this section, we conduct extensive experiments on real-world datasets with diverse downstream tasks and settings to evaluate \our{} on effectiveness and scalability.

% requires \usepackage[table]{xcolor} in preamble
\begin{table*}[!tb]
\centering
\definecolor{mygreen}{RGB}{122,197,180}
\caption{AP for transductive dynamic link prediction with random (rnd), historical (hist), and inductive (ind) negative sampling strategies (NSS). Avg. Rank is computed across datasets for each NSS. The best model is shown in \textbf{bold} and the second best model is \underline{underlined}. \colorbox{mygreen!50}{Shaded cells} indicate relative performance across top 50\% models on a given dataset.}
\label{tab:transductive_ap}
\resizebox{1.01\textwidth}{!}
{
\setlength{\tabcolsep}{0.9mm}
{
% [inline block 0: 1 envs, 23340 chars -> data_tex | \begin{tabular}{c|c|ccccccccccccc} \toprule...]

}
}
\end{table*}

\subsection{Setup}
\label{section-5:setup}

\textbf{Datasets and Baselines.}
We conduct experiments on fourteen benchmark datasets sourced from real-world dynamic systems collected by \citet{poursafaei2022towards}, including Wikipedia, Reddit, MOOC, Myket, LastFM, Enron, Social Evo., UCI, Flights, Can. Parl., USLegis., UN Trade, UN Vote, and Contact. Since the original node classification datasets are
extremely imbalanced, we further include PubMed as a
less severely imbalanced biomedical dataset. Specifically, the
positive-class ratios are only 0.14\% on Wikipedia, 0.05\%
on Reddit, and 0.99\% on MOOC, while PubMed~\cite{ara2024pubmed} contains
19,732 (22.26\%) and 68,916 (77.74\%) instances in the
two classes. Twelve strong dynamic graph learning methods are included as baselines, i.e., JODIE~\cite{kumar2019jodie}, DyRep~\cite{trivedi2019dyrep}, TGAT~\cite{xu2020tgat}, TGN~\cite{rossi2020tgn}, CAWN~\cite{wang2021cawn}, EdgeBank~\cite{poursafaei2022towards}, TCL~\cite{wang2021tcl}, GraphMixer~\cite{cong2023graphmixer}, NAT~\cite{luo2022nat}, PINT~\cite{souza2022pint}, DyGFormer~\cite{yu2023dygformer}, and TPNet~\cite{lu2024tpnet}. Additional details about datasets and baselines are provided in the Appendix~\ref{appendix:datasets} and~\ref{apppendix:baselines}.

\begin{table*}[!ht]
\centering
\definecolor{mygreen}{RGB}{122,197,180}
\caption{ROC-AUC for node classification. Avg. Rank is computed across datasets. \colorbox{mygreen!50}{Shaded cells} indicate relative performance across top 50\% models on a given dataset.}
\label{tab:node_class_test_auc}
\resizebox{1.01\textwidth}{!}
{
\setlength{\tabcolsep}{0.9mm}
{
\begin{tabular}{c|cccccccccccc}
\toprule
Dataset & JODIE & DyRep & TGAT & TGN & CAWN & TCL & GraphMixer & PINT & NAT & DyGFormer & TPNet & DSRD \\
\midrule
Wikipedia & \cellcolor{mygreen!42}86.11{\scriptsize$\pm$1.21} & \textbf{\cellcolor{mygreen!100}88.06{\scriptsize$\pm$1.35}} & 69.78{\scriptsize$\pm$0.73} & \cellcolor{mygreen!57}86.63{\scriptsize$\pm$0.38} & 80.03{\scriptsize$\pm$1.74} & 73.59{\scriptsize$\pm$5.49} & 84.70{\scriptsize$\pm$0.16} & \cellcolor{mygreen!47}86.27{\scriptsize$\pm$0.26} & 64.98{\scriptsize$\pm$1.39} & 84.35{\scriptsize$\pm$1.27} & 83.06{\scriptsize$\pm$0.69} & \underline{\cellcolor{mygreen!90}87.72{\scriptsize$\pm$1.14}} \\
Reddit & 61.34{\scriptsize$\pm$2.31} & 60.94{\scriptsize$\pm$2.10} & 58.85{\scriptsize$\pm$2.44} & \cellcolor{mygreen!39}63.98{\scriptsize$\pm$2.48} & 62.31{\scriptsize$\pm$0.95} & 61.23{\scriptsize$\pm$0.72} & \underline{\cellcolor{mygreen!100}65.72{\scriptsize$\pm$2.73}} & \cellcolor{mygreen!71}64.90{\scriptsize$\pm$0.34} & 58.07{\scriptsize$\pm$1.85} & \cellcolor{mygreen!70}64.86{\scriptsize$\pm$0.97} & \textbf{\cellcolor{mygreen!100}65.72{\scriptsize$\pm$1.09}} & 62.85{\scriptsize$\pm$2.15} \\
MOOC & \cellcolor{mygreen!1}68.80{\scriptsize$\pm$0.77} & 68.74{\scriptsize$\pm$1.76} & 63.89{\scriptsize$\pm$0.61} & \cellcolor{mygreen!5}69.24{\scriptsize$\pm$0.55} & 64.88{\scriptsize$\pm$0.81} & \cellcolor{mygreen!14}70.14{\scriptsize$\pm$1.22} & 67.02{\scriptsize$\pm$0.56} & 68.03{\scriptsize$\pm$0.32} & 55.86{\scriptsize$\pm$5.48} & \underline{\cellcolor{mygreen!73}76.04{\scriptsize$\pm$0.70}} & 64.91{\scriptsize$\pm$2.63} & \textbf{\cellcolor{mygreen!100}78.68{\scriptsize$\pm$0.53}} \\
PubMed & 54.59{\scriptsize$\pm$0.88} & 54.24{\scriptsize$\pm$0.36} & 52.69{\scriptsize$\pm$1.95} & \cellcolor{mygreen!5}57.31{\scriptsize$\pm$0.48} & 54.61{\scriptsize$\pm$1.91} & \cellcolor{mygreen!12}58.79{\scriptsize$\pm$0.50} & 55.25{\scriptsize$\pm$0.26} & 56.32{\scriptsize$\pm$0.96} & \underline{\cellcolor{mygreen!77}71.92{\scriptsize$\pm$3.21}} & \cellcolor{mygreen!20}60.29{\scriptsize$\pm$0.39} & 54.40{\scriptsize$\pm$0.56} & \textbf{\cellcolor{mygreen!100}76.46{\scriptsize$\pm$1.03}} \\
\midrule
Avg. Rank & 6.75 & 7.00 & 11.25 & 4.25 & 8.50 & 6.50 & 5.62 & 5.00 & 9.50 & \underline{4.00} & 7.12 & \textbf{2.50} \\
\bottomrule
\end{tabular}
}
}
\end{table*}

\textbf{Evaluation Settings.}
As stated in Section~\ref{sec:problem}, we evaluate all baselines and our method on dynamic link prediction and dynamic node classification tasks, and the link prediction task is further assessed under both transductive and inductive settings in order to capture performance across different generalization regimes. Three negative sampling strategies (i.e., random, historical and inductive negative sampling strategies) are used for evaluation, where we employ the random negative sampling strategy for model training. For evaluation metrics, Average Precision (AP) is used to measure predictive precision, while the Area Under the Receiver Operating Characteristic Curve (ROC-AUC) provides an estimate of sensitivity and robustness for both link prediction and node classification.

\textbf{Implementation and Configurations.}
We follow the standard training pipeline and evaluation protocol proposed by the Dynamic Graph Library (DyGLib)\footnote{\url{https://github.com/yule-BUAA/DyGLib}} for the training and evaluation of all models. To ensure fair comparison, all methods are trained and evaluated under identical pipelines. All experiments are conducted on a computing platform running Ubuntu 24.04.3 LTS, equipped with an Intel(R) Xeon(R) Gold 6548Y+ CPU @ 2.50 GHz and two NVIDIA L40S 48Q GPUs, while the software environment consists of Python 3.12, CUDA 13.0, and PyTorch 2.6.0. More details about the implementation and configurations are provided in Appendix~\ref{appendix:configurations}

\subsection{Main Experimental Results}
\label{section-5:comparison}

Across all benchmarks and evaluation protocols, the proposed method achieves the strongest overall performance (see statistical tests in Appendix~\ref{appendix:stats_rank}), attaining outstanding average ranks on dynamic link prediction (Tables~\ref{tab:transductive_ap},~\ref{tab:transductive_auc},~\ref{tab:inductive_ap}, and~\ref{tab:inductive_auc}) and dynamic node classification tasks (Table~\ref{tab:node_class_test_auc}), which establishes its overall effectiveness and scalability.
We further observe the following:

\textbf{(i) Generalization.} Across different evaluation settings, including multiple negative sampling strategies as well as transductive and inductive splits, performance remains consistently strong. In particular, generalization to unseen nodes under inductive evaluation is stable, whereas several baseline methods exhibit noticeable degradation under the same conditions (see Tables~\ref{tab:inductive_ap} and~\ref{tab:inductive_auc}).
\textbf{(ii) Downstream tasks.} Across downstream tasks, competitive results are observed for both link prediction and node classification. Link prediction performance remains top-ranked across datasets and settings, while node classification results achieve the best average rank, with the highest ROC-AUC attained on MOOC and competitive performance maintained under highly imbalanced label distributions (see Appendix~\ref{appendix:node_calssificaiton}).
\textbf{(iii) Dataset regimes.} Across datasets with diverse temporal and structural characteristics (see Table~\ref{tab:datasets}), consistent performance trends emerge. Clear advantages are observed on discrete-time dynamic graphs with coarse temporal granularity, competitive results are maintained on continuous-time graphs with fine-grained timestamps, and strong performance is achieved on high-density graphs with rich structural connectivity (see Sections~\ref{sec:adaptive-decay} and~\ref{sec:ablation}).
\textbf{(iv) Scalability.} In terms of scalability, favorable efficiency characteristics are observed. Running latency and throughput remain competitive with existing methods while using fewer parameters, indicating that the approach scales effectively to large dynamic graphs (see Appendix~\ref{appendix:scalability}).

We also observe that several methods achieve near-saturated AP and ROC-AUC scores on a subset of benchmarks. Notably, such near-saturation arises only under specific evaluation settings and does not persist consistently across different negative sampling strategies or tasks, with systematic and interpretable variations observed in dynamic link prediction, as further discussed in Appendix~\ref{appendix:saturated_performance}.

\begin{figure}
    \centering
    \includegraphics[width=\linewidth]{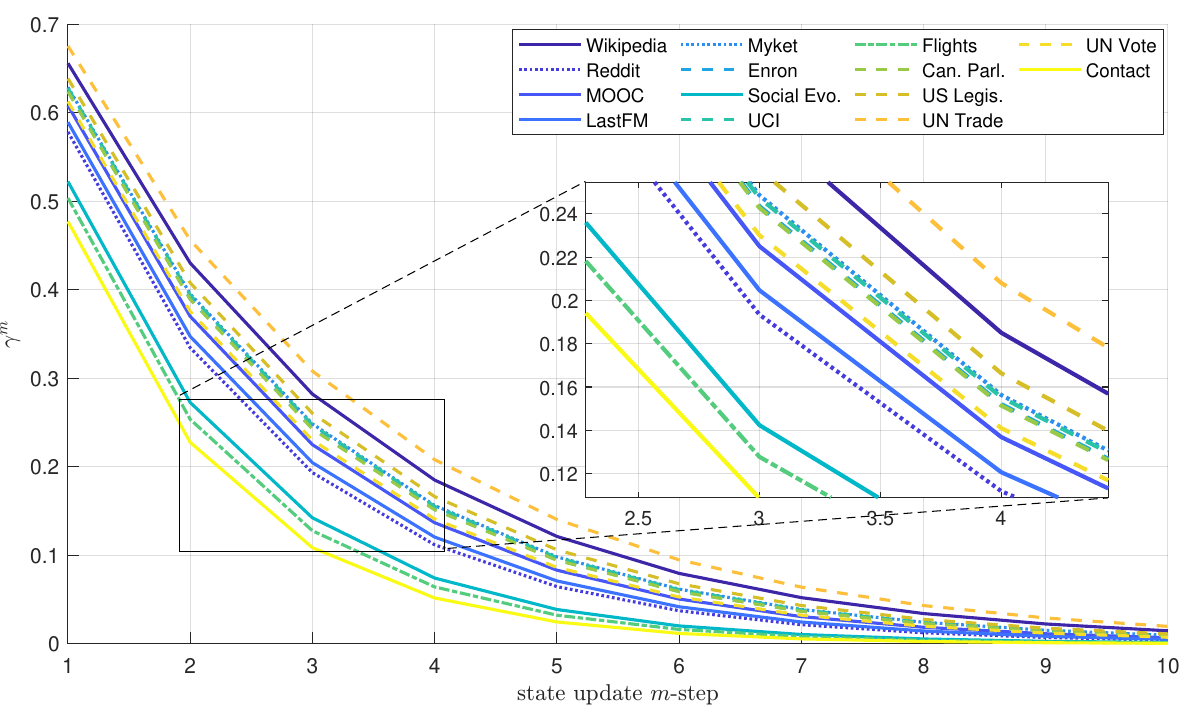}
    \caption{Cumulative long-term retention $\gamma^m$ versus update steps $m$ for all datasets, where $\gamma = \gamma^{(1)}$ is the learned retention gate from the first layer of the trained model.}
    \label{fig:long}
\end{figure}

\subsection{How do dual-scale adaptive decays reveal the temporal and structural effects?}
\label{sec:adaptive-decay}
To examine whether the proposed joint decay mechanism can adaptively align with the diverse temporal and structural dynamics, we analyze the learned decay behaviors across all datasets. Specifically, we extract three types of decay patterns from the trained models, including long-term dependency decay $\gamma$ governed by state update steps $m$ (see Figure~\ref{fig:long}), short-term temporal decay $\omega$ shaped by time interval sensitivity (see Figure~\ref{fig:short}), and topological decay $\psi$ reflecting hop-level topological attenuation (see Figure~\ref{fig:topo}).

The learned parameters exhibit dataset-specific separation that partially correlates with graph properties in Table~\ref{tab:datasets}, though domain characteristics also influence the patterns. Discrete-time graphs with high interaction scale (e.g., UN Trade, UN Vote, Can. Parl.) learn strong long-term retention, preserving information across sparse temporal snapshots. Dense proximity graphs (Contact, Social Evo.) show rapid state turnover but broad receptive fields, whereas sparse bipartite graphs (Reddit, Myket) exhibit high sensitivity to temporal recency and structural locality.

\subsection{Ablation Study}
\label{sec:ablation}
We conduct an ablation study to assess the contribution of individual components in DSRD. All variants are evaluated under identical data splits, training settings, and evaluation protocols as the full model. The following ablated variants are considered:
\textit{\romannumeral1)} w/o DSRD Block, which removes the entire DSRD module (Eqs.~\ref{eq:Delta_ret},~\ref{eq:interlayer_propagation}, and~\ref{eq:retentive_update_2});
\textit{\romannumeral2)} w/o Temporal Decay, which disables the adaptive temporal decay in short-term injection (Eq.~\ref{eq:short-decay});
\textit{\romannumeral3)} w/o Topological Diffusion, which restricts propagation to single-hop neighborhoods (Eq.~\ref{eq:interlayer_propagation});
\textit{\romannumeral4)} w/o State, which removes the retentive state update (Eqs.~\ref{eq:interlayer_propagation} and~\ref{eq:retentive_update_2}).
\begin{figure}[!ht]
    \centering
    \includegraphics[width=\linewidth]{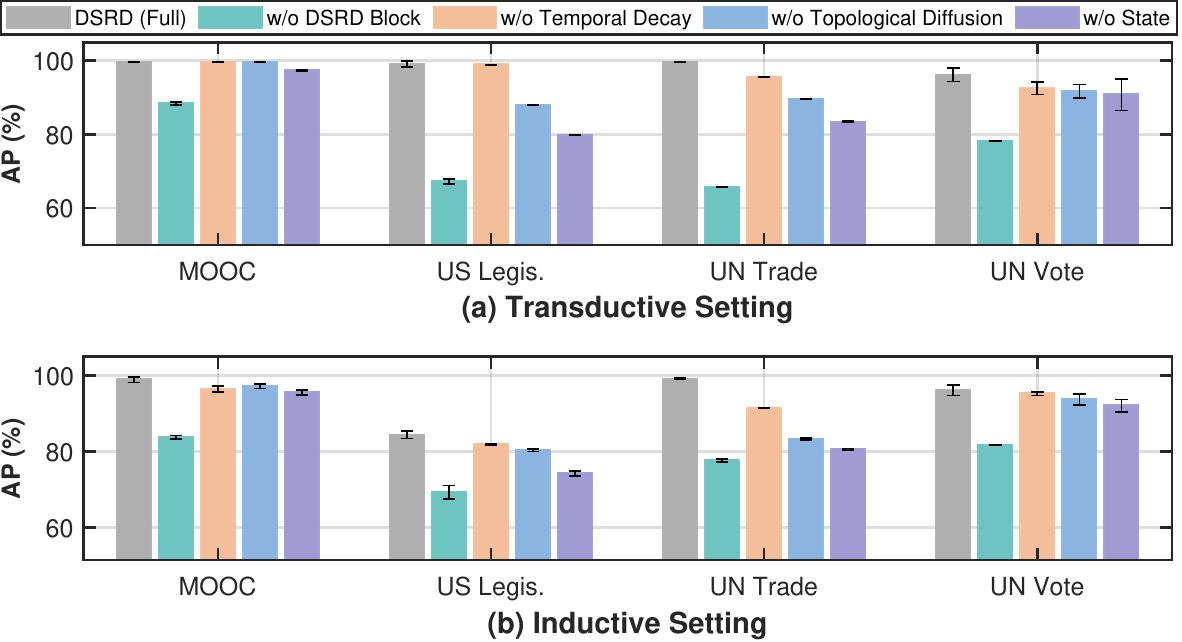}
    \caption{Partial ablation results of DSRD under (a) transductive and (b) inductive settings.}
    \label{fig:ablation-AP-Part}
\end{figure}
Figure~\ref{fig:ablation-AP-Part} reports the results under both transductive and inductive settings. Across datasets, removing any component leads to  performance degradation compared to the full model. The impact of each variant is observable under both transductive and inductive settings, with more pronounced drops on datasets exhibiting denser structure or longer temporal spans. Additional ablation results and extended comparisons are provided in the Appendix~\ref{appendix:ablation}.

\subsection{Time Complexity Analysis}

Let $\kappa$ denote the number of sampled temporal neighbors per node, $K$ the number of DSRD layers for multi-hop structural propagation, and $d_h = d/H$ the per-head dimension. For each event, DSRD performs neighbor sampling, second-order state injection, and layered topological diffusion. The per-event computation involves: (i) linear projections for $Q$ and $K, V$ with cost $\mathcal{O}(\kappa d_x d)$, (ii) second-order outer-product injection $K^\top V$ with cost $\mathcal{O}(\kappa H d_h^2)$ over sampled edges, and (iii) gated retentive state update and query contraction with cost $\mathcal{O}(H d_h^2)$ per node. Since topological diffusion is realized through $K$ stacked DSRD layers rather than explicit $K$-hop neighbor expansion, the total per-event cost is $\mathcal{O}(K \cdot (\kappa d_x d + \kappa d^2/H))$. When $d_x \approx d$, this simplifies to $\mathcal{O}(K \cdot \kappa \cdot d^2)$. Over the entire event stream, the time complexity is
$
\mathcal{O}(|\mathcal{E}_{ t}| \cdot K \cdot \kappa \cdot d^2).
$
Note that the complexity scales linearly with the number of events $|\mathcal{E}_{t}|$ and remains independent of $|\mathcal{V}|$.

\section{Related Work}
\label{section-2}

In this section, we give a structured review of dynamic graph learning methods and provide additional discussion and references in Appendix~\ref{appendix:related-work}.

\subsection{Long-Term Dependency and Memory Mechanisms}
Dynamic graph models often incorporate memory modules or recurrent architectures to capture long-term dependencies~\cite{feng2025comprehensive,gravina2024deep,chang2026grn}. Early approaches like JODIE maintain coupled recurrent states for user–item interactions, updating each entity’s embedding with every event and even projecting future embeddings along a trajectory~\cite{kumar2019jodie}. Similarly, DyRep models communication events via a temporal point process, using event-driven message passing to evolve node states over continuous time~\cite{trivedi2019dyrep}. Temporal Graph Networks (TGN) generalize this idea by assigning each node a persistent memory vector that is updated upon neighbor interactions, thereby encoding both short-term and long-term context in the node’s history~\cite{rossi2020tgn}. These memory-based frameworks enable retention of past influences beyond immediate neighbors, but they typically operate with a single fixed update scheme or decay rate. As a result, they struggle to adapt to heterogeneous dynamic regimes where different relationships exhibit vastly different temporal patterns~\cite{cong2023graphmixer}. For example, interactions that are infrequent yet persistent require long-term memory, whereas bursty interactions demand rapid forgetting, a balance that fixed-decay memories cannot achieve. This limitation often forces a trade-off: a model tuned for one timescale may underperform when the temporal dynamics shift, highlighting the need for more adaptive memory mechanisms in dynamic graph learning~\cite{lu2024tpnet}.
\section{Conclusion}
\label{section-6}

In this paper, we propose a dual-scale retentive dynamics framework that couples temporal memory and structural diffusion within a unified state representation. By introducing learnable decay kernels and gated memory updates, DSRD adaptively balances short-term responsiveness and long-term retention. Theoretical analysis establishes stability guarantees for the learned dynamics. Experiments on 14 benchmarks demonstrate state-of-the-art performance across diverse temporal regimes and evaluation settings. We discuss limitations and future directions in Appendix~\ref{appendix:concerns}.

% \section*{Impact Statement}
% This work advances dynamic graph representation learning, with potential applications in recommendation systems, social network analysis, and epidemic forecasting. While improved temporal link prediction can enhance user experience and enable early warning systems, such capabilities also raise concerns about privacy and potential misuse in surveillance or behavioral tracking. We encourage practitioners to consider ethical guidelines and privacy-preserving techniques when deploying dynamic graph models in sensitive applications. This paper presents foundational methodological research, and we do not anticipate immediate negative societal consequences from the work itself.

\bibliography{references}
\bibliographystyle{icml2026}

%%%%%%%%%%%%%%%%%%%%%%%%%%%%%%%%%%%%%%%%%%%%%%%%%%%%%%%%%%%%%%%%%%%%%%%%%%%%%%%
%%%%%%%%%%%%%%%%%%%%%%%%%%%%%%%%%%%%%%%%%%%%%%%%%%%%%%%%%%%%%%%%%%%%%%%%%%%%%%%
% APPENDIX
%%%%%%%%%%%%%%%%%%%%%%%%%%%%%%%%%%%%%%%%%%%%%%%%%%%%%%%%%%%%%%%%%%%%%%%%%%%%%%%
%%%%%%%%%%%%%%%%%%%%%%%%%%%%%%%%%%%%%%%%%%%%%%%%%%%%%%%%%%%%%%%%%%%%%%%%%%%%%%%
\newpage

\appendix
\onecolumn
\label{section-appendix}

\section{Proofs}
\label{proofs}

\subsection{Proof of Proposition~\ref{prop:unified_state_rw}}
\label{app:proof_prop}
We first establish a key lemma characterizing the walk diffusion kernel.

\begin{lemma}
\label{lem:walk_expansion}
Let $\mathcal{T}_{t}$ denote the ordered set of event timestamps up to $t$. We adopt the convention that the entries of the set $\mathcal{T}_{t}$ are $\tau_1<\tau_2<\cdots<\tau_n$. For $\ell, t > 0$, we have:
\begin{equation}
A_t^{(\ell)} = 
\begin{cases}
\displaystyle\sum_{\substack{\tau_1, \ldots, \tau_\ell \in \mathcal{T}_{t} \\ \tau_1 < \tau_2 < \cdots < \tau_\ell}} T_{\tau_1} T_{\tau_2} \cdots T_{\tau_\ell}, & \text{if } |\mathcal{T}_{t}| \geq \ell, \\[2ex]
0, & \text{otherwise}.
\end{cases}
\label{eq:walk_expansion_lemma}
\end{equation}
\end{lemma}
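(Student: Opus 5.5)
The plan is to argue by induction, either on the number of events $n=|\mathcal{T}_t|$ or equivalently along the ordered timestamps $\tau_1<\dots<\tau_n$, for all $\ell\ge 1$ simultaneously. The recursion \eqref{eq:A_rec_ret} is evaluated only at event times. Between events $A_t^{(\ell)}$ is constant, so we may identify $A_t^{(\ell)}$ with $A_{\tau_n}^{(\ell)}$, where $\tau_n$ is the last event time $\le t$. To express the expansion uniformly, write $P_m^{(\ell)}$ for the elementary symmetric-type sum
\[
P_m^{(\ell)}=\sum_{1\le k_1<\cdots<k_\ell\le m} T_{\tau_{k_1}}\cdots T_{\tau_{k_\ell}},
\]
with the conventions $P_m^{(0)}=I$ and $P_m^{(\ell)}=0$ whenever $\ell>m$. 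The empty sum automatically gives $0$, which matches the ``otherwise'' branch of \eqref{eq:walk_expansion_lemma}. The claim then becomes $A_{\tau_m}^{(\ell)}=P_m^{(\ell)}$ for all $m\ge 0$ and $\ell\ge 0$.

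For the base case $m=0$, the initialization gives $A_0^{(\ell)}=0$ for $\ell>0$ and $A^{(0)}=I$, which agrees with $P_0^{(\ell)}$. For the inductive step, apply \eqref{eq:A_rec_ret} at $t=\tau_m$ with $t^-=\tau_{m-1}$:
\[
A_{\tau_m}^{(\ell)}=A_{\tau_{m-1}}^{(\ell)}+A_{\tau_{m-1}}^{(\ell-1)}T_{\tau_m}.
\]
Substituting the induction hypothesis turns the right-hand side into $P_{m-1}^{(\ell)}+P_{m-1}^{(\ell-1)}T_{\tau_m}$. This matches the Pascal-type identity obtained by splitting the index tuples $k_1<\dots<k_\ell\le m$ according to whether $k_\ell<m$, giving the tuples counted by $P_{m-1}^{(\ell)}$, or $k_\ell=m$. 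In the second case the last factor is $T_{\tau_m}$, multiplied on the right, and the remaining indices range over $P_{m-1}^{(\ell-1)}$. The case $\ell=1$ is consistent with $P^{(0)}=I$.

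The main care point is ordering. The matrices $T_\tau$ do not commute, so I must check that the recursion appends $T_t$ on the right, which it does. That right multiplication produces products ordered by increasing time, exactly as the lemma states. I also need to fix conventions at the boundary. These are $A_t^{(0)}=I$, the identification of $t^-$ with the previous event time (and with $0$ for the first event), and the treatment of reference times $t$ that are not event times. All are handled by noting that $A$ is piecewise constant between events. Everything else is bookkeeping.
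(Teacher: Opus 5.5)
Your argument is correct, but it is organized differently from the paper's proof.

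\textbf{The paper's route.} The paper works in two stages.
\begin{enumerate}
\item It unrolls the time recursion into the auxiliary identity $A_t^{(\ell)}=\sum_{\tau\in\mathcal{T}_t}A_{\tau^-}^{(\ell-1)}T_\tau$, proved by induction over the event index for fixed $\ell$.
\item It then inducts on $\ell$, substituting the expansion of $A_{\tau^-}^{(\ell-1)}$ into each term. This decomposes every ordered product according to the time of its \emph{last} factor, and the case $|\mathcal{T}_{\tau_q}|<\ell$ is handled separately.
\end{enumerate}

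\textbf{Your route.} You run a single induction on the event index for all $\ell$ at once. You observe that the recursion is literally the Pascal rule $P_m^{(\ell)}=P_{m-1}^{(\ell)}+P_{m-1}^{(\ell-1)}T_{\tau_m}$ for the ordered elementary-symmetric sums, splitting only on whether the last index equals $m$.

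\textbf{What each buys.} Your version is shorter and has three further advantages:
\begin{enumerate}
\item It absorbs the ``otherwise'' branch into the empty-sum conventions $P_m^{(\ell)}=0$ for $\ell>m$ and $P_m^{(0)}=I$.
\item It avoids the paper's bookkeeping about which terms of the last-step sum survive.
\item It is more explicit than the paper about non-event reference times and about right multiplication preserving time order.
\end{enumerate}
The paper's detour buys the last-step decomposition itself. Remark~\ref{rem:second_order} uses that form to explain why the prefix $A_{\tau^-}^{(\ell-1)}$ forces the strict ordering $\tau_1<\cdots<\tau_\ell$. In your framework the same identity follows by iterating your Pascal step down to $m=0$, using $P_0^{(\ell)}=0$ for $\ell\ge 1$.
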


\begin{proof}[Proof of Lemma~\ref{lem:walk_expansion}]
For ease of calculation, we first show that the following identity holds for all $\ell, t > 0$:
\begin{equation}
A_t^{(\ell)} = \sum_{\tau \in \mathcal{T}_{t}} A_{\tau^-}^{(\ell-1)} T_\tau.
\label{eq:A_sum_identity}
\end{equation}

\textit{Base case: $t = \tau_1$.}
For $\ell = 1$, from the recursion in Equation~\eqref{eq:A_rec_ret}, we have
\begin{equation}
A_{\tau_1}^{(1)} = A_0^{(1)} + A_0^{(0)} T_{\tau_1} = 0 + I \cdot T_{\tau_1} = T_{\tau_1}.
\end{equation}
Similarly, the right-hand side of Equation~\eqref{eq:A_sum_identity} gives:
\begin{equation}
\sum_{\tau \in \mathcal{T}_{\tau_1}} A_{\tau^-}^{(0)} T_\tau = A_0^{(0)} T_{\tau_1} = T_{\tau_1},
\end{equation}
which confirms that Equation~\eqref{eq:A_sum_identity} holds for $\ell = 1$ at time $\tau_1$.

For any $\ell \geq 2$, we have
\begin{equation}
A_{\tau_1}^{(\ell)} = A_0^{(\ell)} + A_0^{(\ell-1)} T_{\tau_1} = 0 + 0 \cdot T_{\tau_1} = 0,
\end{equation}
and
\begin{equation}
\sum_{\tau \in \mathcal{T}_{\tau_1}} A_{\tau^-}^{(\ell-1)} T_\tau = A_0^{(\ell-1)} T_{\tau_1} = 0,
\end{equation}
confirming Equation~\eqref{eq:A_sum_identity} for all $\ell \geq 1$ at time $\tau_1$.

\textit{Inductive step.}
Assume that Equation~\eqref{eq:A_sum_identity} holds for a fixed $\ell$ at time $\tau_{q-1}$ where $q \in \{2, \ldots, n\}$. We demonstrate that it also holds at time $\tau_q$.

Using the recursion in Equation~\eqref{eq:A_rec_ret}, we have
\begin{align*}
A_{\tau_q}^{(\ell)} 
&= A_{\tau_{q-1}}^{(\ell)} + A_{\tau_{q-1}}^{(\ell-1)} T_{\tau_q} \\
&= \sum_{\tau \in \mathcal{T}_{\tau_{q-1}}} A_{\tau^-}^{(\ell-1)} T_\tau + A_{\tau_{q-1}}^{(\ell-1)} T_{\tau_q} \\
&= \sum_{\tau \in \mathcal{T}_{\tau_q}} A_{\tau^-}^{(\ell-1)} T_\tau,
\end{align*}
which concludes the proof of Equation~\eqref{eq:A_sum_identity}.

Now we prove the walk expansion~\eqref{eq:walk_expansion_lemma} using Equation~\eqref{eq:A_sum_identity}.

\textit{Base case: $\ell = 1$.}
From Equation~\eqref{eq:A_sum_identity}, we have
\begin{equation}
A_t^{(1)} = \sum_{\tau \in \mathcal{T}_{t}} A_{\tau^-}^{(0)} T_\tau = \sum_{\tau \in \mathcal{T}_{t}} T_\tau, \quad \text{for } t > 0,
\end{equation}
which confirms Equation~\eqref{eq:walk_expansion_lemma} for $\ell = 1$.

\textit{Inductive step.}
Assume that Equation~\eqref{eq:walk_expansion_lemma} holds for $\ell - 1$ at all time moments $\tau_1, \ldots, \tau_n$. We show it also holds for $\ell$.

For the case $1 \leq q \leq \ell - 1$ (i.e., $|\mathcal{T}_{\tau_q}| < \ell$), from Equation~\eqref{eq:A_sum_identity} we get
\begin{equation}
A_{\tau_q}^{(\ell)} = \sum_{\tau \in \mathcal{T}_{\tau_q}} A_{\tau^-}^{(\ell-1)} T_\tau = 0,
\end{equation}
since $A_{\tau^-}^{(\ell-1)} = 0$ whenever $|\mathcal{T}_{\tau^-}| < \ell - 1$.

For $\ell \leq q \leq n$, applying Equation~\eqref{eq:A_sum_identity} and the induction hypothesis yields
\begin{equation}
\begin{aligned}
A_{\tau_q}^{(\ell)} 
&= \sum_{\tau \in \mathcal{T}_{\tau_q}} A_{\tau^-}^{(\ell-1)} T_\tau \\
&= A_{\tau_{q-\ell+1}^-}^{(\ell-1)} T_{\tau_{q-\ell+1}} + \cdots + A_{\tau_{q-1}}^{(\ell-1)} T_{\tau_q} \\
&= \sum_{\substack{r_1, \ldots, r_\ell \in \{\tau_1, \ldots, \tau_q\} \\ r_1 < r_2 < \cdots < r_\ell}} T_{r_1} T_{r_2} \cdots T_{r_\ell}.
\end{aligned}
\end{equation}
This concludes the proof of the lemma.
\end{proof}

\begin{proof}
We first note that $A_t^{(\ell)}(i,j)$ denotes a scalar diffusion score, while $\Delta_t^{i}$ is a matrix-valued increment, hence
$\sum_{i \in \mathcal{V}} A_t^{(\ell)}(i,j)\,\Delta_t^{i}\in\mathbb{R}^{d\times d}$
in Equation~\eqref{eq:S_ret} is well-defined as a weighted sum of matrices. We prove the identity in Equation~\eqref{eq:S_closed_ret} by mathematical induction.

\textit{Base case.}
Let $t = \tau_1$, where $\tau_1 > 0$ is the first event timestamp. It follows that $\mathcal{T}_{t} = \{\tau_1\}$ and $\mathcal{T}_{(\tau_1, t]} = \emptyset$.

Using the recursion in~\cref{eq:S_ret}, we have for all $j$ and $\ell$ that
\begin{equation}
\label{eq:S_closed_ret_proof}
S_t^{j,(\ell)} = a_t \cdot 0 + b_t \bar \Delta_t^{j,(\ell)} = \sum_{\tau \in \mathcal{T}_{t}} \left( \prod_{r \in \mathcal{T}_{(\tau, t]}} a_r \right) b_\tau \bar \Delta_{\tau}^{j,(\ell)},
\end{equation}
where the summation contains a single term corresponding to $\tau = \tau_1$, and we adopt the convention that $\prod_{r \in \mathcal{T}_{(\tau_1, t]}} a_r = 1$ whenever $\mathcal{T}_{(\tau_1, t]} = \emptyset$.

\textit{Inductive step.}
Assume that the identity in Equation~\eqref{eq:S_closed_ret} holds for $t^-$ and demonstrate that it also holds for $t$:
\begin{align}
S_t^{j,(\ell)} &= a_t \left[ \sum_{\tau \in \mathcal{T}_{t^-}} \left( \prod_{r \in \mathcal{T}_{(\tau, t^-]}} a_r \right) b_\tau \bar \Delta_{\tau}^{j,(\ell)} \right] + b_t \cdot \bar \Delta_t^{j,(\ell)} \nonumber \\
&= \sum_{\tau \in \mathcal{T}_{t^-}} \left( \prod_{r \in \mathcal{T}_{(\tau, t]}} a_r \right) b_\tau \cdot \bar \Delta_{\tau}^{j,(\ell)} + \sum_{\tau = t} \underbrace{\left( \prod_{r \in \mathcal{T}_{(t, t]}} a_r \right)}_{= 1} b_\tau \cdot \bar \Delta_{\tau}^{j,(\ell)} \nonumber \\
&= \sum_{\tau \in \mathcal{T}_{t}} \left( \prod_{r \in \mathcal{T}_{(\tau, t]}} a_r \right) b_\tau \cdot \bar \Delta_{\tau}^{j,(\ell)},
\end{align}
which concludes the proof of Equation~\eqref{eq:S_closed_ret}.
\end{proof}

\begin{remark}
\label{rem:second_order}
It remains to justify that $A_t^{(\ell)}$ in Equation~\eqref{eq:A_rec_ret} indeed represents time-respecting diffusion scores of increasing walk length. By Lemma~\ref{lem:walk_expansion}, for $\ell, t > 0$ with $|\mathcal{T}_{t}| \geq \ell$:
\begin{equation}
A_t^{(\ell)}
=
\sum_{\substack{\tau_1, \ldots, \tau_\ell \in \mathcal{T}_{t} \\ \tau_1 < \tau_2 < \cdots < \tau_\ell}}
T_{\tau_1} T_{\tau_2} \cdots T_{\tau_\ell},
\label{eq:A_walk_expansion}
\end{equation}
where the constraint $\tau_1 < \tau_2 < \cdots < \tau_\ell$ arises because each recursive step uses $A_{\tau^-}^{(\ell-1)}$, ensuring that the $(\ell-1)$-hop prefix is computed strictly before time $\tau_\ell$.

Since each $T_{\tau}$ is an instantaneous transition matrix induced by events at time $\tau$, the matrix product $T_{\tau_1} \cdots T_{\tau_\ell}$ sums over all length-$\ell$ paths whose step times satisfy $\tau_1 < \cdots < \tau_\ell$. Therefore, each entry $A_t^{(\ell)}(i,j)$ equals the total count (or weighted score, if $T_\tau$ entries are weighted) of all time-respecting walks from $i$ to $j$ of length $\ell$ up to time $t$. Combining Equation~\eqref{eq:S_closed_ret_proof} with Equation~\eqref{eq:A_walk_expansion}, $S_t^{j,(\ell)}$ becomes a temporally retentive accumulation of matrix-valued increments $\Delta_\tau^{i}$ whose influence is propagated to $j$ through length-$\ell$ time-respecting walks, and the readout $O_t^{j}$ performs the local contraction with $Q_t^{j}$.

The increment $\Delta_t^{i}$ aggregates rank-one outer products $(K_{t_e}^{i})^{\!\top} V_{t_e}^{i}$, a standard second-order feature construction used to capture multiplicative channel interactions \cite{lin2015bilinear,gao2016compact,dai2019second,peng2023rwkv,peng2025rwkv7,sun2023retnet,gu2024mamba}. Proposition~\ref{prop:unified_state_rw} shows that once these second-order increments are injected into the graph and propagated via time-respecting walk scores, the resulting state simultaneously admits a closed-form historical aggregation over time and an explicit high-order structural diffusion over increasing walk length. Notably, the recursion Equation~\eqref{eq:A_rec_ret} with unit coefficients ($a_t = b_t = 1$) recovers exact temporal walk counts, while the state dynamics in \eqref{eq:S_ret} with learnable $(a_t, b_t)$ introduce adaptive temporal weighting that balances recent versus historical information.
\end{remark}

\subsection{Proof of \cref{thm:stability}}
\label{app:proof_stability}

We recall that the aggregated increments in Equation~\eqref{eq:retentive_update} are uniformly bounded under standard architectural constraints:
\begin{itemize}
\item \textbf{Bounded node features:} Input features are typically normalized (e.g., layer normalization or batch normalization) before being fed into the model, ensuring $\|x_v\| \le B_x$ for some constant $B_x$.
\item \textbf{Bounded projection matrices:} Weight decay regularization during training implicitly constrains $||W_{K}||, ||W_{V}||\leq L$.
\item \textbf{Bounded neighborhood size:} Real-world temporal graphs exhibit bounded degree distributions. Moreover, neighbor sampling strategies commonly used in scalable GNN training (e.g., sampling at most $d_{\max}$ neighbors per node) explicitly enforce this constraint.
\end{itemize}

\begin{lemma}
\label{lem:bounded_increments}
Assume the following architectural constraints hold:
\begin{enumerate}[(i)]
    \item Node features are bounded: $\|x_v\| \le B_x$ for all $v \in \mathcal{V}$.
    \item Projection matrices are bounded: $\|W_K\|, \|W_V\| \le L$.
    \item The number of interactions involving each node $i$ is bounded: $|\{(i,j,t_e) \in \mathcal{E}_{ t} \}| \le d_{\max}$ .
\end{enumerate}
Then there exists a constant $M > 0$ such that $\|\bar{\Delta}_{t}^{j,(\ell)}\| \le M$ for all $t$, $j$, and $\ell$.
\end{lemma}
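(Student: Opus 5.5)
The plan is to bound the matrix-valued injection in three layers: first a single rank-one increment, then the per-source injection $\Delta_t^i$, and finally the walk-weighted sum $\bar{\Delta}_t^{j,(\ell)}$.

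\textbf{Single increment.} By submultiplicativity, $\|K_{t_e}^i\| = \|x_i W_K\| \le B_x L$, and likewise $\|V_{t_e}^i\| \le B_x L$. For a rank-one outer product the Frobenius norm factorizes, so
\[
\|(K_{t_e}^i)^{\top} V_{t_e}^i\| = \|K_{t_e}^i\|\,\|V_{t_e}^i\| \le L^2 B_x^2 .
\]
If the augmented projections $\tilde K,\tilde V$ are used, I would add a bound on the edge-feature term. Since $\phi_t$ is a cosine, $\|\phi_t\|$ is bounded by $\sqrt{d}$, and edge attributes are assumed bounded. This gives a constant $B$ replacing $L^2B_x^2$.

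\textbf{Per-source injection.} Next bound the causal weight. The sigmoid factor lies in $(0,1)$. The decay factor $\exp(-\lambda(\Delta t_e)^\alpha)$ lies in $(0,1]$ because $\lambda>0$ and $\Delta t_e \ge 0$. Hence $0\le\omega\le 1$, and the normalization $\tilde\omega$ preserves this. Using (iii), at most $d_{\max}$ terms enter the sum, so the triangle inequality gives $\|\Delta_t^i\| \le d_{\max} L^2 B_x^2$.

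\textbf{Walk-weighted sum.} Write $\|\bar{\Delta}_t^{j,(\ell)}\| \le \sum_i |A_t^{(\ell)}(i,j)|\,\|\Delta_t^i\|$. It remains to bound $\sum_i |A_t^{(\ell)}(i,j)|$ uniformly in $t$, $j$ and $\ell\le K$, and this is the main obstacle. By Lemma~\ref{lem:walk_expansion}, $A_t^{(\ell)}$ is a sum of products of transition matrices over increasing time tuples. The raw number of such tuples grows with $|\mathcal{T}_t|$, so walk counts are not bounded a priori.

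I would resolve this with assumption (iii). Since node $j$ has at most $d_{\max}$ incident events, the last step of any walk into $j$ has at most $d_{\max}$ choices. Recursing backward through the $\ell$ steps, each intermediate node also has at most $d_{\max}$ incident events. The number of length-$\ell$ time-respecting walks ending at $j$ is therefore at most $d_{\max}^{\ell}\le d_{\max}^K$. This requires the entries of $T_\tau$ to be bounded by $1$; if $T_\tau$ is instead column-normalized, the column sums of $A_t^{(\ell)}$ stay bounded by $1$ directly. Combining the three steps gives
\[
M = d_{\max}^{K+1} L^2 B_x^2 ,
\]
which is uniform in $t$, $j$ and $\ell\in\{1,\dots,K\}$.

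If the practical injection $\tilde\Delta_t^{j,(\ell)}$ is meant instead, the walk step disappears. In that case $M=d_{\max}L^2B_x^2$ follows from the second step alone, and I would note this simpler case as well.
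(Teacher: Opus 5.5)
Your proof is correct, and it is more careful than the paper's at the one point where the two differ. Your first two steps match the paper exactly: $\omega\le 1$, $\|(K_{t_e}^i)^{\top}V_{t_e}^i\|\le L^2B_x^2$, and hence $\|\Delta_t^i\|\le d_{\max}L^2B_x^2$.

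The divergence is in the walk-weighted sum. The paper writes $\sum_i A_t^{(\ell)}(i,j)\|\Delta_t^i\|\le d_{\max}\cdot d_{\max}L^2B_x^2$, so it bounds the column sum $\sum_i A_t^{(\ell)}(i,j)$ by $d_{\max}$ for every $\ell$ and obtains $M=d_{\max}^2L^2B_x^2$. That bound is justified only for $\ell=1$. There $A_t^{(1)}=\sum_\tau T_\tau$, and the column sum just counts events into $j$. For $\ell\ge 2$ the column sum counts time-respecting walks into $j$, and these can exceed $d_{\max}$ under assumption (iii). For example, suppose $i$ sends $d_{\max}/2$ events to $u$, and $u$ afterwards sends $d_{\max}/2$ events to $j$. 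Then $A_t^{(2)}(i,j)=d_{\max}^2/4$, which exceeds $d_{\max}$ once $d_{\max}>4$.

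Your backward walk count gives $\sum_i|A_t^{(\ell)}(i,j)|\le d_{\max}^{\ell}$, and hence $M=d_{\max}^{K+1}L^2B_x^2$. This supplies exactly the step the paper skips. It costs a constant that grows exponentially in the depth. It also makes explicit two hypotheses the paper leaves implicit: $\ell\le K$, and each event contributes weight at most $1$ to $T_\tau$. Since the lemma only asserts that some $M$ exists, that price is acceptable, and your version actually establishes the statement.

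One aside in your proposal is wrong and should be dropped. Column-normalizing each $T_\tau$ does not make the column sums of $A_t^{(\ell)}$ at most $1$. The reason is that $A_t^{(\ell)}$ sums the products over all increasing $\ell$-tuples of timestamps. Already for $\ell=1$, the column sum would equal the number of distinct timestamps at which $j$ receives an event, which can be $d_{\max}$. Normalized entries are still bounded by the event counts, so your walk-count bound covers this case anyway, and the main argument does not depend on the remark.
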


\begin{proof}
Recall from Equation~\eqref{eq:Delta_ret} that each increment $\Delta_t^i$ takes the form:
\begin{equation}
\Delta_t^i = \sum_{(i,j,t_e) \in \mathcal{E}_{ t}} \omega(t, (i,j,t_e)) (K_{t_e}^i)^\top V_{t_e}^i.
\end{equation}
Since $\sigma(\cdot) \in (0,1)$ and $\exp(-\lambda^{(\ell)} (\Delta t_e)^{\alpha^{(\ell)}}) \le 1$ (see Equation~\eqref{eq:short-decay}), each weight satisfies $\omega(t, (i,j,t_e)) \le 1$. For the second-order term:
\begin{equation}
\|(K_{t_e}^i)^\top V_{t_e}^i\| \le \|K_{t_e}^i\| \cdot \|V_{t_e}^i\| \le  L^2 B_x^2.
\end{equation}
Since the number of interactions up to time $t$ that involve node $i$  is bounded, we have: $||\Delta_{t}^{i}|| \leq d_{\max}L^2B_x^2$.

It follows that
\begin{equation}
\Big\|\bar  \Delta_t^{i,(\ell)}\Big\| = \Big \| \sum_{i \in \mathcal{V}}  A_t^{( \ell )}\left(i,j\right) \, \Delta_t^i \Big \|\le \sum_{i \in \mathcal{V}}  A_t^{( \ell )}\left(i,j\right) \, \Big \| \Delta_t^i \Big \| \le d_{\max} \cdot d_{\max} \cdot L^2 B_x^2 = d_{\max}^2 L^2 B_x^2 = M.
\end{equation}
\end{proof}

Based on Lemma~\ref{lem:bounded_increments}, we prove the~\cref{thm:stability}:

\begin{proof}

\textbf{Proof of part (i).}
For any $\tau_{k} \in \mathcal{T}_{t}$, we denote $s_k = \|S_{\tau_k}^{j,(\ell)}\|$. Using Equation~\eqref{eq:retentive_update}, we readily obtain the following inequalities:
\begin{equation}
s_k \le \gamma^{(\ell)} s_{k-1} + (1 - \gamma^{(\ell)}) \|\bar{\Delta}_{t_k}^{j}\| \le \gamma^{(\ell)} s_{k-1} + (1 - \gamma^{(\ell)}) M.
\label{eq:norm_recursion}
\end{equation}

We prove by mathematical induction that $s_n \le (1 - (\gamma^{(\ell)})^n) M$ for all $n \ge 1$.

\textit{Base case} ($n = 1$): Using the initialization $S_0^{j,(\ell)} = 0$ (i.e., $s_0 = 0$), we have
\begin{equation}
s_1 \le (\gamma^{(\ell)}) \cdot 0 + (1 - \gamma^{(\ell)}) M = (1 - \gamma^{(\ell)}) M = (1 - \gamma^{(\ell)}) M.
\end{equation}

Assume $s_{n-1} \le (1 - (\gamma^{(\ell)})^{n-1}) M$ holds. Then by Equation~\eqref{eq:norm_recursion}:
\begin{align}
s_n &\le \gamma^{(\ell)} s_{n-1} + (1 - \gamma^{(\ell)}) M \nonumber\\
&\le \gamma^{(\ell)} (1 - (\gamma^{(\ell)})^{n-1}) M + (1 - \gamma^{(\ell)}) M \nonumber\\
&= (\gamma^{(\ell)} - (\gamma^{(\ell)})^n) M + (1 - \gamma^{(\ell)}) M \nonumber\\
&= (1 - (\gamma^{(\ell)})^n) M.
\end{align}
Since $\gamma^{(\ell)} \in (0,1)$, we have $(1 - (\gamma^{(\ell)})^n) < 1$, hence $\|S_t^{j,(\ell)}\| \le M$.

\textbf{Proof of part (ii).} Under the conditions of Theorem~\ref{thm:stability} and Lemma~\ref{lem:bounded_increments}, $\|S_t^{j,(\ell)}\| \le M$. Combined with $\|Q_t^j\| \le L B_x$, we have:
\begin{equation}
\|O_t^{j,(\ell)}\| = \|Q_t^{j} S_t^{j,(\ell)}\| \le \|Q_t^j\| \cdot \|S_t^{j,(\ell)}\| \le L B_x \cdot M.
\end{equation}

This completes the proof.
\end{proof}

\section{Additional Descriptions of Methodology}
\label{appendix:sketch}
\begin{figure}[htbp]
    \centering
    \begin{subfigure}[b]{0.37\textwidth}
        \centering
        \includegraphics[width=\textwidth]{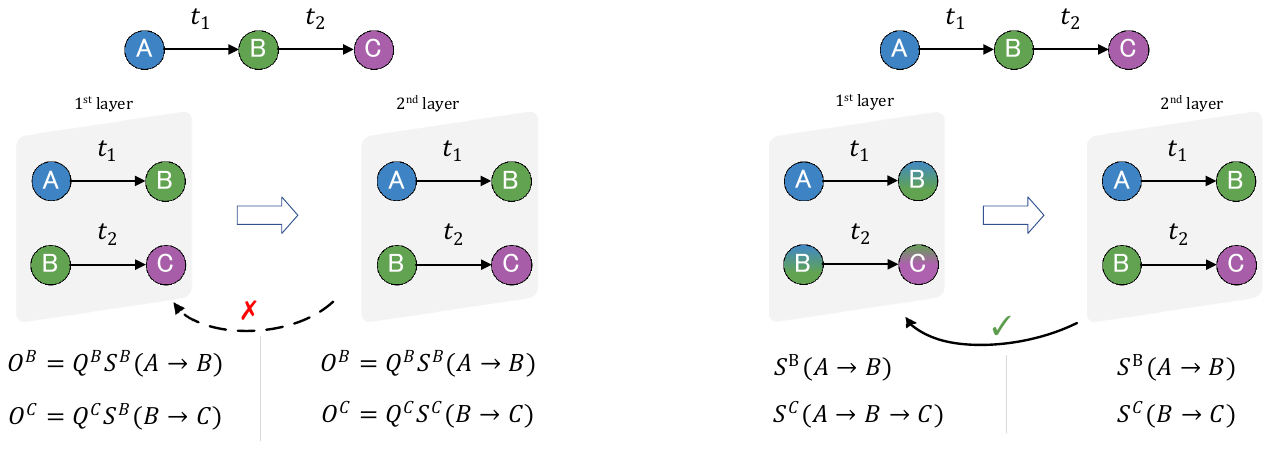}
        \caption{w/o layer-wise propagation} \label{fig:sub_sketch_a}
    \end{subfigure}
    \quad % \hfill 用于在两个子图之间插入弹性空白，将它们撑到两侧
    \begin{subfigure}[b]{0.33\textwidth}
        \centering
        \includegraphics[width=\textwidth]{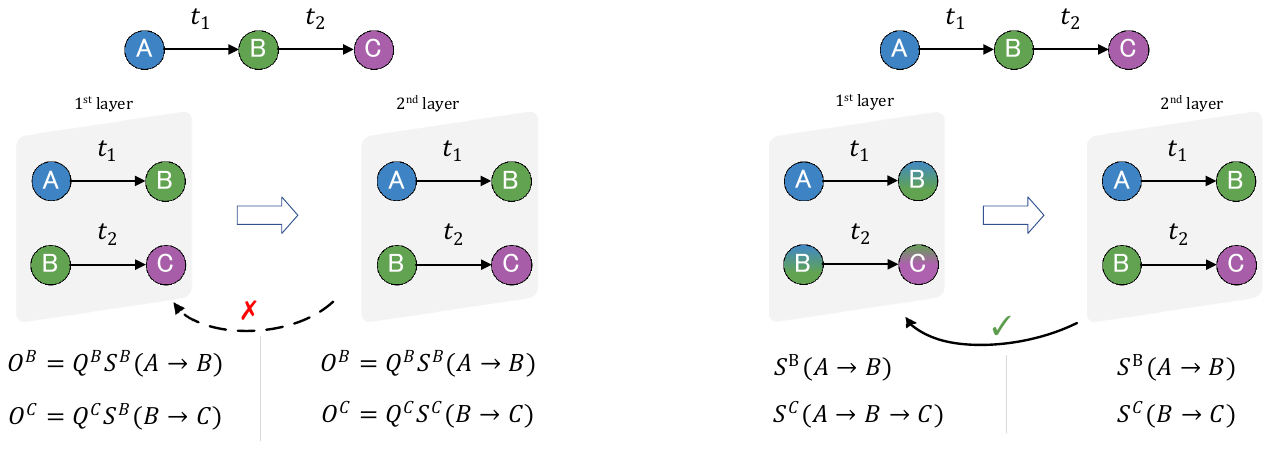}
        \caption{w/ with layer-wise propagation} \label{fig:sub_sketch_b}
    \end{subfigure}
    
\caption{An illustration of topological diffusion via temporal walks.}

    \label{fig:main_sketch}
\end{figure}

\paragraph{Additional Descriptions for Topological Diffusion via Temporal Walks.}

In~\cref{fig:main_sketch}, we provide an illustration of a minimal temporal chain $A \xrightarrow{t_1} B \xrightarrow{t_2} C$ to explain why topological diffusion in dynamic graphs should be realized through temporally valid walks rather than relying on layer-local retention. We contrast two designs that differ solely in whether the retentive state $S$ is propagated across layers.

\textbf{Case (a).}  
In~\cref{fig:sub_sketch_a}, each layer maintains its own retentive state without cross-layer transmission. At the first layer, node $B$ produces its output as $O^B = Q^B S^B(A \rightarrow B)$, while node $C$ produces $O^C = Q^C S^B(B \rightarrow C)$. Here, the superscript of $S$ indicates that both outputs are computed from layer-local states that only encode one-hop interactions at the corresponding time steps. Although the outputs $O^B$ and $O^C$ are forwarded to the next layer, this transmission occurs exclusively through the $Q S$ interaction, which acts as a compressed readout. Consequently, at the second layer, node $C$ can only form $O^C = Q^C S^C(B \rightarrow C)$, where the higher-order dependency $A \rightarrow B \rightarrow C$ has already been discarded. The information originating from $A \rightarrow B$ is never preserved in a persistent state accessible to node $C$, but is instead repeatedly filtered through successive $Q S$ projections, leading to severe structural information loss.

\textbf{Case (b).}  
In~\cref{fig:sub_sketch_b}, the retentive state $S$ is explicitly propagated across layers. Node $B$ first constructs the state $S^B(A \rightarrow B)$, which is then carried forward and extended at the next layer when processing the interaction $B \rightarrow C$. As a result, node $C$ maintains $S^C(A \rightarrow B \rightarrow C)$, which represents a temporally valid walk rather than an isolated one-hop edge. The output is subsequently obtained as $O^C = Q^C S^C(A \rightarrow B \rightarrow C)$, allowing higher-order temporal dependencies to be preserved without repeated compression.

This comparison shows that, in the absence of layer-wise state propagation, multi-layer retentive architectures collapse into stacks of one-hop retention modules whose effective receptive field remains local. Topological diffusion via temporal walks is therefore essential for maintaining long-range structural information in dynamic graphs, which directly motivates the design adopted in this work.

\section{Additional Experimental Descriptions and Results}
\subsection{Datasets}
\label{appendix:datasets}
\begin{table*}[!htb]
\centering
\caption{Statistics of datasets. Type C denotes continuous-time dynamic graphs in which each interaction is associated with a precise timestamp, and Type D denotes discrete-time dynamic graphs whose interactions are aggregated into coarser temporal windows such as days or years. N.\&F. Feat. represents node and link feature dimensions. Graph density is approximated as $|\mathcal{E}|/|\mathcal{V}|^2$ and the scale metric is defined as average links per unique timestep, reflecting the temporal intensity of interactions.}
\label{tab:datasets}
\resizebox{1\columnwidth}{!}{
\begin{tabular}{l|ccccccccccc}
\toprule
Datasets  & Type & Domains & Bipartite & \#Nodes & \#Links & \#N.\&L. Feat. 
& \#Unique Steps & Duration & Time Granularity 
& Density & Scale \\ 
\midrule
Wikipedia  & C & Social & True  & 9227  & 157474  & - \& 172 & 152757 & 1 month & Unix time 
& 1.85e-03 & 1.03 \\

Reddit   & C & Social & True  & 10984 & 672447  & - \& 172 & 669065 & 1 month & Unix time 
& 5.57e-03 & 1.00 \\

MOOC    & C & Interaction & True & 7144  & 411749  & - \& 4  & 345600 & 17 months & Unix time 
& 8.07e-03 & 1.19 \\

LastFM   & C & Interaction & True & 1980  & 1293103 & - \& -  & 1283614 & 1 month & Unix time 
& 3.30e-01 & 1.01 \\

Myket    & C & Interaction & True & 17988 & 694121  & - \& -  & 693774 & 197 days & Unix time 
& 2.15e-03 & 1.00 \\

Enron    & C & Social & False & 184   & 125235  & - \& -  & 22632  & 3 years & Unix time 
& 3.70e+00 & 5.53 \\

Social Evo. & C & Proximity & False & 74 & 2099519 & - \& 2 & 565932 & 8 months & Unix time
& 3.83e+00 & 3.71 \\

UCI     & C & Social & False & 1899  & 59835   & - \& -  & 58911  & 196 days & Unix time 
& 1.66e-02 & 1.01 \\

Flights   & D & Transport & False & 13169 & 1927145 & - \& 1  & 122   & 4 months & days 
& 1.11e-03 & 15799.55 \\

Can. Parl. & D & Politics & False & 734  & 74478  & - \& 1  & 14    & 14 years & years 
& 1.38e-01 & 5320.00 \\

US Legis.  & D & Politics & False & 225  & 60396  & - \& 1  & 12    & 12 congresses & congresses 
& 1.19e+00 & 5033.00 \\

UN Trade  & D & Economics & False & 255  & 507497 & - \& 1  & 32    & 32 years & years 
& 7.79e+00 & 15859.28 \\

UN Vote   & D & Politics & False & 201  & 1035742 & - \& 1 & 72    & 72 years & years 
& 2.56e+01 & 14357.53 \\

Contact   & C & Proximity & False & 692  & 2426279 & - \& 1 & 8064  & 1 month & 5 minutes 
& 5.06e+00 & 300.94 \\
\bottomrule
\end{tabular}}
\end{table*}

We provide below the sources and details of the datasets\footnote{\url{https://zenodo.org/records/7213796\#.Y1cO6y8r30o}} used in this work:

\begin{itemize}

\item \textbf{Wikipedia}: This bipartite CTDG captures a month-long snapshot of editorial activity on Wikipedia, where editors interact with pages through timestamped editing events. Each interaction carries a 172-dimensional LIWC feature vector derived from the edit text, and dynamic labels reflect temporary editing bans imposed on users.

\item \textbf{Reddit}: This dataset chronicles user participation across Reddit communities during a single month. Users and posts form a bipartite CTDG, and each posting event is time-stamped and accompanied by a 172-dimensional LIWC vector. Posting bans appear as dynamic labels on edges.

\item \textbf{MOOC}: Student engagement in an online course ecosystem is represented over 17 months, with interactions linking students to videos, quizzes, or problem sets. Each timestamped access event forms an edge enriched with four interaction features, producing a dense bipartite CTDG.

\item \textbf{LastFM}: User listening behavior over one month is encoded as a CTDG connecting users to songs. Each listening action is treated as a temporal edge. No node or edge features are included, making temporal patterns the primary signal.

\item \textbf{Myket}: This large-scale bipartite CTDG follows app-download activity in the Myket marketplace over 197 days. Users download apps through timestamped interactions, forming edges that carry no additional features beyond timing and identity.

\item \textbf{Enron}: Internal email exchanges within Enron Energy over a three-year period are modeled as a CTDG. Employees serve as nodes, and each email forms a timestamped edge; although featureless, the dataset exhibits rich temporal bursts and communication cycles.

\item \textbf{Social Evo.}: This proximity-based CTDG tracks face-to-face encounters among students in an undergraduate dormitory over eight months. Each interaction is time-stamped and includes two proximity-related features that quantify encounter strength or duration, resulting in a densely connected human-centric temporal network.

\item \textbf{UCI}: This CTDG captures online communication among nearly two thousand UC Irvine students over 196 days. Edges represent time-stamped message exchanges without additional attributes, highlighting temporal messaging dynamics.

\item \textbf{Flights}: Air-traffic patterns during the COVID-19 pandemic are represented as a directed DTDG, with airports as nodes and daily flight volumes forming weighted edges. The network spans four months and exhibits strong temporal variation in global mobility.

\item \textbf{Can. Parl.}: Parliamentary co-voting among Canadian Members of Parliament from 2006 to 2019 produces a political DTDG. Nodes correspond to MPs, and weighted edges track how frequently two MPs agree by voting “yes’’ on the same bill within a year.

\item \textbf{US Legis.}: Legislative collaboration in the US Senate is encoded through co-sponsorship behavior. Senators form the nodes of this DTDG, and edge weights record the number of jointly sponsored bills during each congressional session.

\item \textbf{UN Trade}: This economic DTDG follows agricultural trade flows among 181 countries across 32 years. Directed edges quantify normalized trade values, producing a temporally evolving network dominated by weighted interactions.

\item \textbf{UN Vote}: Voting alignment in the United Nations General Assembly is represented over more than seven decades. Countries are connected whenever they vote “yes” on the same proposal, with edge weights accumulating agreement counts across time.

\item \textbf{Contact}: This proximity-based CTDG monitors physical interactions among roughly 700 university students over one month. Timestamped proximity events form edges whose weights capture the duration or intensity of close-range encounters.

\end{itemize}

\subsection{Baselines}
\label{apppendix:baselines}

The detailed descriptions of the baselines are provided as follows:

\begin{itemize}
\item \textbf{JODIE} learns coupled recurrent updates for users and items and predicts future embeddings through a projection module that models continuous temporal trajectories.

\item \textbf{DyRep} captures graph evolution by modeling communication and association events with temporal point processes, allowing node embeddings to evolve through event-driven message passing.

\item \textbf{TGAT} applies temporal graph attention with a functional time encoding derived from Bochner’s theorem to aggregate temporal–topological information in a continuous-time setting.

\item \textbf{TGN} maintains a memory vector for each node and updates it through temporal message passing, generating time-conditioned embeddings that reflect long-term and short-term dynamics.

\item \textbf{CAWN} represents interactions through causal anonymous walks and encodes temporal walk patterns with recurrent architectures, extracting fine-grained causal and temporal dependencies.

\item \textbf{EdgeBank} stores previously observed edges in memory and predicts future interactions by directly querying the edge history, serving as a simple yet competitive memory-based baseline.

\item \textbf{TCL} introduces a two-stream Transformer that processes the temporal neighborhoods of interacting nodes and applies cross-attention to learn robust temporal representations via contrastive learning.

\item \textbf{GraphMixer} adopts an MLP-Mixer architecture with fixed time encodings and lightweight mixing operations, enabling efficient and stable temporal representation learning without attention or recurrence.

\item \textbf{NAT} incorporates neighborhood-aware temporal aggregation that combines time features with attention-based structural context, producing expressive time-sensitive embeddings.

\item \textbf{PINT} enhances the expressive power of temporal message passing by enforcing injective aggregation functions and encoding relative temporal positions to distinguish complex interaction patterns.

\item \textbf{DyGFormer} models dynamic graphs using a Transformer architecture with aligned temporal encodings and co-occurrence modeling, capturing long-range temporal and structural dependencies.

\item \textbf{TPNet} introduces a time-decay temporal walk matrix and projects it into node representations through random feature propagation, preserving temporal walk statistics efficiently for link prediction.
\end{itemize}

\subsection{Detailed Configurations and Hyperparameters}
\label{appendix:configurations}
For the proposed DSRD, we adopt a unified set of hyperparameters across all experiments, with minor adjustments depending on the dataset scale. DSRD is configured with two model layers and a channel embedding dimension of 64. A dropout rate of 0.1 is used consistently. The number of short-term temporal neighbors is 20. All other hyperparameters follow the global training configuration, including a learning rate of $10^{-4}$, batch size of 200, early stopping with a patience of~10, and the Adam optimizer. For baseline models, we use the configurations and hyperparameters reported in~\citet{yu2023dygformer} and~\citet{lu2024tpnet}, which we verified through independent validation.

\subsection{Additional Results of Transductive Dynamic Link Prediction}
To complement the main findings presented in Section~\ref{section-5:comparison} under the transductive setting, we report the additional ROC-AUC scores across all datasets in Table~\ref{tab:transductive_auc}.

% requires \usepackage[table]{xcolor} in preamble
\begin{table*}[!tb]
\centering
\definecolor{mygreen}{RGB}{122,197,180}
\caption{AUC for transductive dynamic link prediction with random, historical, and inductive negative sampling strategies (NSS). Avg. Rank is computed across datasets for each NSS. The best model is shown in \textbf{bold} and the second best model is \underline{underlined}. \colorbox{mygreen!50}{Shaded cells} indicate relative performance across top 50\% models on a given dataset.}
\label{tab:transductive_auc}
\resizebox{1.01\textwidth}{!}
{
\setlength{\tabcolsep}{0.9mm}
{
% [inline block 1: 1 envs, 23262 chars -> data_tex | \begin{tabular}{c|c|ccccccccccccc} \toprule...]

}
}
\end{table*}

\subsection{Experimental Results of Inductive Dynamic Link Prediction}
The results of inductive dynamic link prediction are reported in Tables~\ref{tab:inductive_ap} and~\ref{tab:inductive_auc}.

% requires \usepackage[table]{xcolor} in preamble
\begin{table*}[!ht]
\centering
\definecolor{mygreen}{RGB}{122,197,180}
\caption{AP for inductive dynamic link prediction with random, historical, inductive negative sampling strategies (NSS). Avg. Rank is computed across datasets for each NSS. The best model is shown in \textbf{bold} and the second best model is \underline{underlined}. \colorbox{mygreen!50}{Shaded cells} indicate relative performance across top 50\% models on a given dataset.}
\label{tab:inductive_ap}
\resizebox{1.01\textwidth}{!}
{
\setlength{\tabcolsep}{0.9mm}
{
% [inline block 2: 1 envs, 21993 chars -> data_tex | \begin{tabular}{c|c|cccccccccccc} \toprule...]

}
}
\end{table*}
% requires \usepackage[table]{xcolor} in preamble
\begin{table*}[!ht]
\centering
\definecolor{mygreen}{RGB}{122,197,180}
\caption{AUC for inductive dynamic link prediction with random, historical, inductive negative sampling strategies (NSS). Avg. Rank is computed across datasets for each NSS. The best model is shown in \textbf{bold} and the second best model is \underline{underlined}. \colorbox{mygreen!50}{Shaded cells} indicate relative performance across top 50\% models on a given dataset.}
\label{tab:inductive_auc}
\resizebox{1.01\textwidth}{!}
{
\setlength{\tabcolsep}{0.9mm}
{
% [inline block 3: 1 envs, 21991 chars -> data_tex | \begin{tabular}{c|c|cccccccccccc} \toprule...]

}
}
\end{table*}

\subsection{Statistical Significance of Average Ranking Comparisons}
\label{appendix:stats_rank}
Based on the critical difference diagrams in Figures~\ref{fig:CD_AP} and~\ref{fig:CD_AUC}, several observations can be drawn from a modeling perspective. Overall, the proposed DSRD ranks first or within the top tier across both AP and ROC-AUC metrics under all six evaluation settings. The Friedman test with Holm-corrected post-hoc comparisons~\cite{demvsar2006statistical} reveals clear statistically significant gaps between DSRD and the majority of baselines. As visually indicated by the horizontal bars in Figures~\ref{fig:CD_AP} and~\ref{fig:CD_AUC}, DSRD is not connected to most competing methods, confirming that its performance advantage is statistically significant at the 95\% confidence level. Notably, the separation between DSRD and lower-ranked methods is substantial across all settings, with rank differences often exceeding 4--6 positions. Even compared to recent competitive baselines such as TPNet and DyGFormer, DSRD maintains a consistent lead, particularly under challenging inductive evaluation protocols where many baselines exhibit performance degradation. These results indicate that the performance gains of DSRD are systematic rather than driven by isolated datasets or sampling strategies.

Across different negative sampling strategies and evaluation protocols, DSRD exhibits notably smaller rank variance than competing methods. In contrast, many memory-based and attention-based baselines suffer pronounced performance degradation under inductive evaluation or harder negative sampling. This behavior suggests that DSRD is less reliant on memorizing historical edges and instead learns transferable temporal–structural patterns that generalize to unseen nodes and shifted interaction distributions. Comparisons with heuristic and memorization-oriented methods further support this interpretation. While EdgeBank achieves competitive rankings in some transductive settings, its performance deteriorates substantially under inductive sampling, reflecting its dependence on repeated interactions. DSRD, by contrast, maintains strong rankings in inductive scenarios, highlighting the advantage of its retentive state formulation over pure historical lookup.

Among methods designed to capture higher-order structure, such as CAWN, NAT, PINT, DyGformer, and TPNet, competitive performance is observed in certain settings, confirming the importance of multi-hop context in dynamic graphs. However, these approaches exhibit larger fluctuations across evaluation regimes, consistent with their reliance on fixed walk lengths, propagation depths, or static decay schemes. By integrating structural diffusion into a unified retentive state with adaptive temporal and topological regulation, DSRD achieves more stable and consistent performance across heterogeneous graph dynamics.

\begin{figure}[!ht]   % 你也可以改成 [h] 或 [!htbp]
    \centering
    \scalebox{0.8}{
\begin{tikzpicture}[
  group line/.style=semithick
]
\begin{axis}[
  clip=false,
  grid=both,
  axis line style=draw=none,
  tick style=draw=none,
  xticklabel pos=upper,
  y dir=reverse,
  xmin=0.5,
  ymin=0.66,
  legend style={draw=none,fill=none,at={(axis cs:13.0,3.2)},anchor=west,row sep=.35em,legend columns=1},
  legend cell align=left,
  title style={yshift=\baselineskip},
  width=0.7\linewidth,
  ytick={1,2,3,4,5,6},
  yticklabels={{Random NSS (Transductive)},{Historical NSS (Transductive)},{Inductive NSS (Transductive)},{Random NSS (Inductive)},{Historical NSS (Inductive)},{Inductive NSS (Inductive)}},
  xmax=12.5,
  ymax=6.66,
  height=1.0*\axisdefaultheight,
  cycle list={{color1,mark=+},{color2,mark=diamond*},{color3,mark=triangle*,semithick},{color4,mark=square*,semithick},{color5,mark=triangle*,semithick},{color6,mark=star},{color7,mark=otimes*},{color8,mark=x},{color9,mark=Mercedes star},{color10,mark=o,only marks},{color11,mark=halfcircle*},{color12,mark=pentagon*,semithick},{color13,mark=pentagon*}},
  title=AP
]
\addplot+[only marks] coordinates {
  (7.285714285714286, 1)
  (8.0, 2)
  (8.5, 3)
  (8.142857142857142, 4)
  (8.142857142857142, 5)
  (8.285714285714286, 6)
};
\addlegendentry{JODIE}
\addplot+[only marks] coordinates {
  (9.5, 1)
  (7.928571428571429, 2)
  (8.5, 3)
  (9.928571428571429, 4)
  (8.714285714285714, 5)
  (8.714285714285714, 6)
};
\addlegendentry{DyRep}
\addplot+[only marks] coordinates {
  (10.0, 1)
  (8.5, 2)
  (7.071428571428571, 3)
  (9.0, 4)
  (6.5, 5)
  (6.5, 6)
};
\addlegendentry{TGAT}
\addplot+[only marks] coordinates {
  (7.071428571428571, 1)
  (5.928571428571429, 2)
  (6.357142857142857, 3)
  (8.785714285714286, 4)
  (7.5, 5)
  (7.5, 6)
};
\addlegendentry{TGN}
\addplot+[only marks] coordinates {
  (8.0, 1)
  (10.428571428571429, 2)
  (8.357142857142858, 3)
  (7.0, 4)
  (8.857142857142858, 5)
  (8.857142857142858, 6)
};
\addlegendentry{CAWN}
\addplot+[only marks] coordinates {
  (9.928571428571429, 1)
  (8.857142857142858, 2)
  (8.214285714285714, 3)
  (8.785714285714286, 4)
  (6.785714285714286, 5)
  (6.642857142857143, 6)
};
\addlegendentry{TCL}
\addplot+[only marks] coordinates {
  (7.857142857142857, 1)
  (6.928571428571429, 2)
  (6.571428571428571, 3)
  (7.285714285714286, 4)
  (5.142857142857143, 5)
  (5.142857142857143, 6)
};
\addlegendentry{GraphMixer}
\addplot+[only marks] coordinates {
  (3.9285714285714284, 1)
  (4.285714285714286, 2)
  (5.357142857142857, 3)
  (4.5, 4)
  (5.785714285714286, 5)
  (5.785714285714286, 6)
};
\addlegendentry{PINT}
\addplot+[only marks] coordinates {
  (5.285714285714286, 1)
  (6.0, 2)
  (7.5, 3)
  (5.857142857142857, 4)
  (8.642857142857142, 5)
  (8.642857142857142, 6)
};
\addlegendentry{NAT}
\addplot+[only marks] coordinates {
  (6.035714285714286, 1)
  (6.642857142857143, 2)
  (6.857142857142857, 3)
  (4.571428571428571, 4)
  (6.357142857142857, 5)
  (6.357142857142857, 6)
};
\addlegendentry{DyGFormer}
\addplot+[only marks] coordinates {
  (1.9642857142857142, 1)
  (3.357142857142857, 2)
  (3.357142857142857, 3)
  (2.4285714285714284, 4)
  (4.142857142857143, 5)
  (4.142857142857143, 6)
};
\addlegendentry{TPNet}
\addplot+[only marks] coordinates {
  (1.1428571428571428, 1)
  (1.1428571428571428, 2)
  (1.3571428571428572, 3)
  (1.7142857142857142, 4)
  (1.4285714285714286, 5)
  (1.4285714285714286, 6)
};
\addlegendentry{DSRD}
\draw[group line] (axis cs:6.035714285714286,1.1044303797468356) -- ++(0pt,-3pt) -- ([yshift=-3pt]axis cs:10.0,1.1044303797468356) -- ++(0pt,3pt);
\draw[group line] (axis cs:1.1428571428571428,1.2626582278481013) -- ++(0pt,-3pt) -- ([yshift=-3pt]axis cs:1.9642857142857142,1.2626582278481013) -- ++(0pt,3pt);
\draw[group line] (axis cs:5.285714285714286,1.4208860759493671) -- ++(0pt,-3pt) -- ([yshift=-3pt]axis cs:8.0,1.4208860759493671) -- ++(0pt,3pt);
\draw[group line] (axis cs:3.9285714285714284,1.5791139240506329) -- ++(0pt,-3pt) -- ([yshift=-3pt]axis cs:7.857142857142857,1.5791139240506329) -- ++(0pt,3pt);
\draw[group line] (axis cs:4.285714285714286,2.180327868852459) -- ++(0pt,-3pt) -- ([yshift=-3pt]axis cs:10.428571428571429,2.180327868852459) -- ++(0pt,3pt);
\draw[group line] (axis cs:3.357142857142857,2.4535519125683063) -- ++(0pt,-3pt) -- ([yshift=-3pt]axis cs:8.0,2.4535519125683063) -- ++(0pt,3pt);
\draw[group line] (axis cs:3.357142857142857,3.2832618025751072) -- ++(0pt,-3pt) -- ([yshift=-3pt]axis cs:8.5,3.2832618025751072) -- ++(0pt,3pt);
\draw[group line] (axis cs:4.5,4.1044303797468356) -- ++(0pt,-3pt) -- ([yshift=-3pt]axis cs:9.0,4.1044303797468356) -- ++(0pt,3pt);
\draw[group line] (axis cs:1.7142857142857142,4.262658227848101) -- ++(0pt,-3pt) -- ([yshift=-3pt]axis cs:4.571428571428571,4.262658227848101) -- ++(0pt,3pt);
\draw[group line] (axis cs:2.4285714285714284,4.420886075949367) -- ++(0pt,-3pt) -- ([yshift=-3pt]axis cs:5.857142857142857,4.420886075949367) -- ++(0pt,3pt);
\draw[group line] (axis cs:4.571428571428571,4.579113924050633) -- ++(0pt,-3pt) -- ([yshift=-3pt]axis cs:9.928571428571429,4.579113924050633) -- ++(0pt,3pt);
\draw[group line] (axis cs:4.142857142857143,5.283261802575107) -- ++(0pt,-3pt) -- ([yshift=-3pt]axis cs:8.857142857142858,5.283261802575107) -- ++(0pt,3pt);
\draw[group line] (axis cs:4.142857142857143,6.283261802575107) -- ++(0pt,-3pt) -- ([yshift=-3pt]axis cs:8.857142857142858,6.283261802575107) -- ++(0pt,3pt);
\end{axis}
\end{tikzpicture}
}

    \caption{Average AP rank comparison of dynamic graph models across three negative sampling strategies. Statistical significance is assessed using the Friedman test with Holm-corrected post-hoc comparisons at the 95\% confidence level.}
    \label{fig:CD_AP}
\end{figure}

\begin{figure}[!ht]   % 你也可以改成 [h] 或 [!htbp]
    \centering
    \scalebox{0.8}{%
\begin{tikzpicture}[
  group line/.style=semithick
]
\begin{axis}[
  clip=false,
  grid=both,
  axis line style=draw=none,
  tick style=draw=none,
  xticklabel pos=upper,
  y dir=reverse,
  xmin=0.5,
  ymin=0.66,
  legend style={draw=none,fill=none,at={(axis cs:13.0,3.2)},anchor=west,row sep=.35em,legend columns=1},
  legend cell align=left,
  title style={yshift=\baselineskip},
  width=0.7\linewidth,
  ytick={1,2,3,4,5,6},
  yticklabels={{Random NSS (Transductive)},{Historical NSS (Transductive)},{Inductive NSS (Transductive)},{Random NSS (Inductive)},{Historical NSS (Inductive)},{Inductive NSS (Inductive)}},
  xmax=12.5,
  ymax=6.66,
  height=1.0*\axisdefaultheight,
  cycle list={{color1,mark=+},{color2,mark=diamond*},{color3,mark=triangle*,semithick},{color4,mark=square*,semithick},{color5,mark=triangle*,semithick},{color6,mark=star},{color7,mark=otimes*},{color8,mark=x},{color9,mark=Mercedes star},{color10,mark=o,only marks},{color11,mark=halfcircle*},{color12,mark=pentagon*,semithick},{color13,mark=pentagon*}},
  title=AUC
]
\addplot+[only marks] coordinates {
  (6.714285714285714, 1)
  (6.0, 2)
  (7.0, 3)
  (7.714285714285714, 4)
  (7.571428571428571, 5)
  (7.571428571428571, 6)
};
\addlegendentry{JODIE}
\addplot+[only marks] coordinates {
  (8.714285714285714, 1)
  (7.357142857142857, 2)
  (7.571428571428571, 3)
  (9.714285714285714, 4)
  (8.428571428571429, 5)
  (8.428571428571429, 6)
};
\addlegendentry{DyRep}
\addplot+[only marks] coordinates {
  (10.142857142857142, 1)
  (9.0, 2)
  (8.071428571428571, 3)
  (8.714285714285714, 4)
  (6.714285714285714, 5)
  (6.714285714285714, 6)
};
\addlegendentry{TGAT}
\addplot+[only marks] coordinates {
  (6.785714285714286, 1)
  (6.071428571428571, 2)
  (5.714285714285714, 3)
  (8.642857142857142, 4)
  (7.0, 5)
  (7.142857142857143, 6)
};
\addlegendentry{TGN}
\addplot+[only marks] coordinates {
  (8.571428571428571, 1)
  (10.285714285714286, 2)
  (8.071428571428571, 3)
  (6.928571428571429, 4)
  (8.142857142857142, 5)
  (8.142857142857142, 6)
};
\addlegendentry{CAWN}
\addplot+[only marks] coordinates {
  (10.214285714285714, 1)
  (9.571428571428571, 2)
  (8.857142857142858, 3)
  (8.857142857142858, 4)
  (7.0, 5)
  (6.928571428571429, 6)
};
\addlegendentry{TCL}
\addplot+[only marks] coordinates {
  (8.071428571428571, 1)
  (7.642857142857143, 2)
  (7.571428571428571, 3)
  (7.0, 4)
  (5.571428571428571, 5)
  (5.571428571428571, 6)
};
\addlegendentry{GraphMixer}
\addplot+[only marks] coordinates {
  (3.5714285714285716, 1)
  (3.9285714285714284, 2)
  (5.214285714285714, 3)
  (4.571428571428571, 4)
  (5.5, 5)
  (5.5, 6)
};
\addlegendentry{PINT}
\addplot+[only marks] coordinates {
  (6.571428571428571, 1)
  (6.392857142857143, 2)
  (7.285714285714286, 3)
  (6.5, 4)
  (9.0, 5)
  (9.071428571428571, 6)
};
\addlegendentry{NAT}
\addplot+[only marks] coordinates {
  (5.214285714285714, 1)
  (7.464285714285714, 2)
  (7.571428571428571, 3)
  (4.785714285714286, 4)
  (7.214285714285714, 5)
  (7.071428571428571, 6)
};
\addlegendentry{DyGFormer}
\addplot+[only marks] coordinates {
  (2.142857142857143, 1)
  (3.2142857142857144, 2)
  (4.0, 3)
  (2.642857142857143, 4)
  (4.214285714285714, 5)
  (4.214285714285714, 6)
};
\addlegendentry{TPNet}
\addplot+[only marks] coordinates {
  (1.2857142857142858, 1)
  (1.0714285714285714, 2)
  (1.0714285714285714, 3)
  (1.9285714285714286, 4)
  (1.6428571428571428, 5)
  (1.6428571428571428, 6)
};
\addlegendentry{DSRD}
\draw[group line] (axis cs:1.2857142857142858,1.0862745098039215) -- ++(0pt,-3pt) -- ([yshift=-3pt]axis cs:2.142857142857143,1.0862745098039215) -- ++(0pt,3pt);
\draw[group line] (axis cs:2.142857142857143,1.2169934640522877) -- ++(0pt,-3pt) -- ([yshift=-3pt]axis cs:3.5714285714285716,1.2169934640522877) -- ++(0pt,3pt);
\draw[group line] (axis cs:5.214285714285714,1.3477124183006537) -- ++(0pt,-3pt) -- ([yshift=-3pt]axis cs:8.714285714285714,1.3477124183006537) -- ++(0pt,3pt);
\draw[group line] (axis cs:6.571428571428571,1.4784313725490197) -- ++(0pt,-3pt) -- ([yshift=-3pt]axis cs:10.214285714285714,1.4784313725490197) -- ++(0pt,3pt);
\draw[group line] (axis cs:3.5714285714285716,1.6091503267973857) -- ++(0pt,-3pt) -- ([yshift=-3pt]axis cs:6.785714285714286,1.6091503267973857) -- ++(0pt,3pt);
\draw[group line] (axis cs:3.9285714285714284,2.180327868852459) -- ++(0pt,-3pt) -- ([yshift=-3pt]axis cs:10.285714285714286,2.180327868852459) -- ++(0pt,3pt);
\draw[group line] (axis cs:3.2142857142857144,2.4535519125683063) -- ++(0pt,-3pt) -- ([yshift=-3pt]axis cs:7.642857142857143,2.4535519125683063) -- ++(0pt,3pt);
\draw[group line] (axis cs:4.0,3.2832618025751072) -- ++(0pt,-3pt) -- ([yshift=-3pt]axis cs:8.857142857142858,3.2832618025751072) -- ++(0pt,3pt);
\draw[group line] (axis cs:4.571428571428571,4.132264529058116) -- ++(0pt,-3pt) -- ([yshift=-3pt]axis cs:9.714285714285714,4.132264529058116) -- ++(0pt,3pt);
\draw[group line] (axis cs:2.642857142857143,4.332665330661323) -- ++(0pt,-3pt) -- ([yshift=-3pt]axis cs:6.5,4.332665330661323) -- ++(0pt,3pt);
\draw[group line] (axis cs:1.9285714285714286,4.533066132264529) -- ++(0pt,-3pt) -- ([yshift=-3pt]axis cs:4.785714285714286,4.533066132264529) -- ++(0pt,3pt);
\draw[group line] (axis cs:1.6428571428571428,5.180327868852459) -- ++(0pt,-3pt) -- ([yshift=-3pt]axis cs:4.214285714285714,5.180327868852459) -- ++(0pt,3pt);
\draw[group line] (axis cs:4.214285714285714,5.453551912568306) -- ++(0pt,-3pt) -- ([yshift=-3pt]axis cs:9.0,5.453551912568306) -- ++(0pt,3pt);
\draw[group line] (axis cs:1.6428571428571428,6.180327868852459) -- ++(0pt,-3pt) -- ([yshift=-3pt]axis cs:4.214285714285714,6.180327868852459) -- ++(0pt,3pt);
\draw[group line] (axis cs:4.214285714285714,6.453551912568306) -- ++(0pt,-3pt) -- ([yshift=-3pt]axis cs:9.071428571428571,6.453551912568306) -- ++(0pt,3pt);
\end{axis}
\end{tikzpicture}
}
    \caption{Average ROC-AUC rank comparison of dynamic graph models across three negative sampling strategies. All the graphical conventions are the same as in~\cref{fig:CD_AP}}
    \label{fig:CD_AUC}
\end{figure}

\subsection{Additional Description of Dynamic Node Classification}
\label{appendix:node_calssificaiton}
Since node classification datasets exhibit highly imbalanced label distributions, ROC-AUC alone may not fully reflect model performance on minority classes. 
We therefore additionally report the area under the precision--recall curve (AUPRC), which is more sensitive to class imbalance. 
For each dataset, we also report the relative improvement over a random classifier, where the random baseline corresponds to the positive-class ratio.

\begin{table*}[!htb]
\centering
\caption{AUPRC (\%) for node classification under imbalanced label distributions. The value in parentheses denotes the relative improvement over the random baseline determined by the positive-class ratio.}
\label{tab:node_cls_auprc}
\resizebox{0.8\linewidth}{!}{
\begin{tabular}{l|c|ccccc}
\toprule
Dataset & Imbalance & TGN & PINT & DyGFormer & TPNet & DSRD (ours) \\
\midrule
Wikipedia & Severe 
& 1.13 (6.07$\times$) 
& 2.11 (11.33$\times$) 
& 1.42 (7.62$\times$) 
& 0.13 (0.66$\times$) 
& \textbf{2.47 (13.09$\times$)} \\

Reddit & Severe 
& \textbf{0.17 (1.79$\times$)} 
& 0.10 (1.11$\times$) 
& 0.16 (1.64$\times$) 
& 0.11 (1.10$\times$) 
& 0.13 (1.35$\times$) \\

MOOC & Severe 
& 1.75 (1.87$\times$) 
& 1.82 (1.94$\times$) 
& 3.77 (4.02$\times$) 
& 1.17 (1.24$\times$) 
& \textbf{4.10 (4.37$\times$)} \\

PubMed & Moderate 
& 81.88 (1.04$\times$) 
& 82.59 (1.04$\times$) 
& 83.33 (1.05$\times$) 
& 78.11 (0.99$\times$) 
& \textbf{87.48 (1.11$\times$)} \\
\bottomrule
\end{tabular}}
\end{table*}

Table~\ref{tab:node_cls_auprc} shows that DSRD remains competitive under different degrees of class imbalance. 
On MOOC and PubMed, DSRD achieves the best AUPRC, reaching 4.10\% on MOOC and 87.48\% on PubMed, corresponding to 4.37$\times$ and 1.11$\times$ improvement over the random baselines, respectively. 
On Wikipedia, despite the severe imbalance, DSRD also obtains the best AUPRC with 2.47\%, achieving a 13.09$\times$ improvement over random prediction. 
These results indicate that the proposed retentive diffusion mechanism can preserve useful minority-class signals even when positive labels are sparse.

For Reddit, all methods obtain very low AUPRC values, and DSRD is slightly behind the best-performing baseline. 
This dataset represents an extreme imbalance regime, where the positive-class ratio is extremely small and precision--recall performance becomes highly sensitive to a small number of positive predictions. 
The uniformly low AUPRC values across all methods suggest that such severe imbalance remains challenging for current dynamic graph learning models. 
Overall, the AUPRC results complement the ROC-AUC evaluation by showing that DSRD is robust across moderately and severely imbalanced settings, while also revealing the limitations of existing methods under extreme label sparsity.

\subsection{Interpreting High Performance in Temporal Link Prediction}
\label{appendix:saturated_performance}

In this section, we analyze the near-saturated AP and ROC-AUC scores observed for a subset of benchmark and evaluation settings in~\cref{tab:transductive_ap,tab:transductive_auc}. Rather than indicating trivial memorization, these instances of saturation reflect structural properties of commonly used temporal graph datasets and evaluation protocols. We further show that average performance alone may obscure meaningful variations in generalization behavior when evaluation conditions change across temporal, structural, and task-specific regimes.

Many widely adopted temporal graph benchmarks exhibit strong temporal locality and interaction repetition. In particular, a substantial fraction of test edges correspond to node pairs that have appeared previously in the training history, often within relatively short time intervals. Under such conditions, link prediction is mainly influenced by short-term temporal cues, which can lead to near-saturated performance for multiple methods. Notably, such saturation arises only under specific benchmark configurations and does not persist uniformly across all evaluation settings (see~\cref{tab:inductive_ap,tab:inductive_auc,tab:node_class_test_auc}). In addition, the degree of performance saturation is sensitive to the choice of negative sampling strategy. When negatives are sampled in a manner that is structurally or temporally distant from positive edges, the resulting classification boundary becomes relatively coarse, reducing the discriminative power of aggregate metrics such as AP and ROC-AUC. As the negative sampling protocol changes, performance exhibits systematic and interpretable variations, indicating that saturation is conditional on the evaluation setup rather than inherent to the prediction task itself. Further evidence for this interpretation is provided by node-level prediction tasks, where near-saturated performance is not observed and clear performance differences remain across methods. Unlike link prediction, node-level tasks depend more heavily on aggregated temporal and structural context, making them less susceptible to short-term interaction repetition. This contrast reinforces the view that near-saturated link prediction performance is regime-dependent and does not reflect trivial memorization.

\subsection{Detailed Analysis of Adaptive Decay Behaviors}
\label{appendix:decays}

\begin{figure}[!h]
    \centering
    \begin{minipage}[t]{0.48\textwidth}
        \centering
        \includegraphics[width=\linewidth]{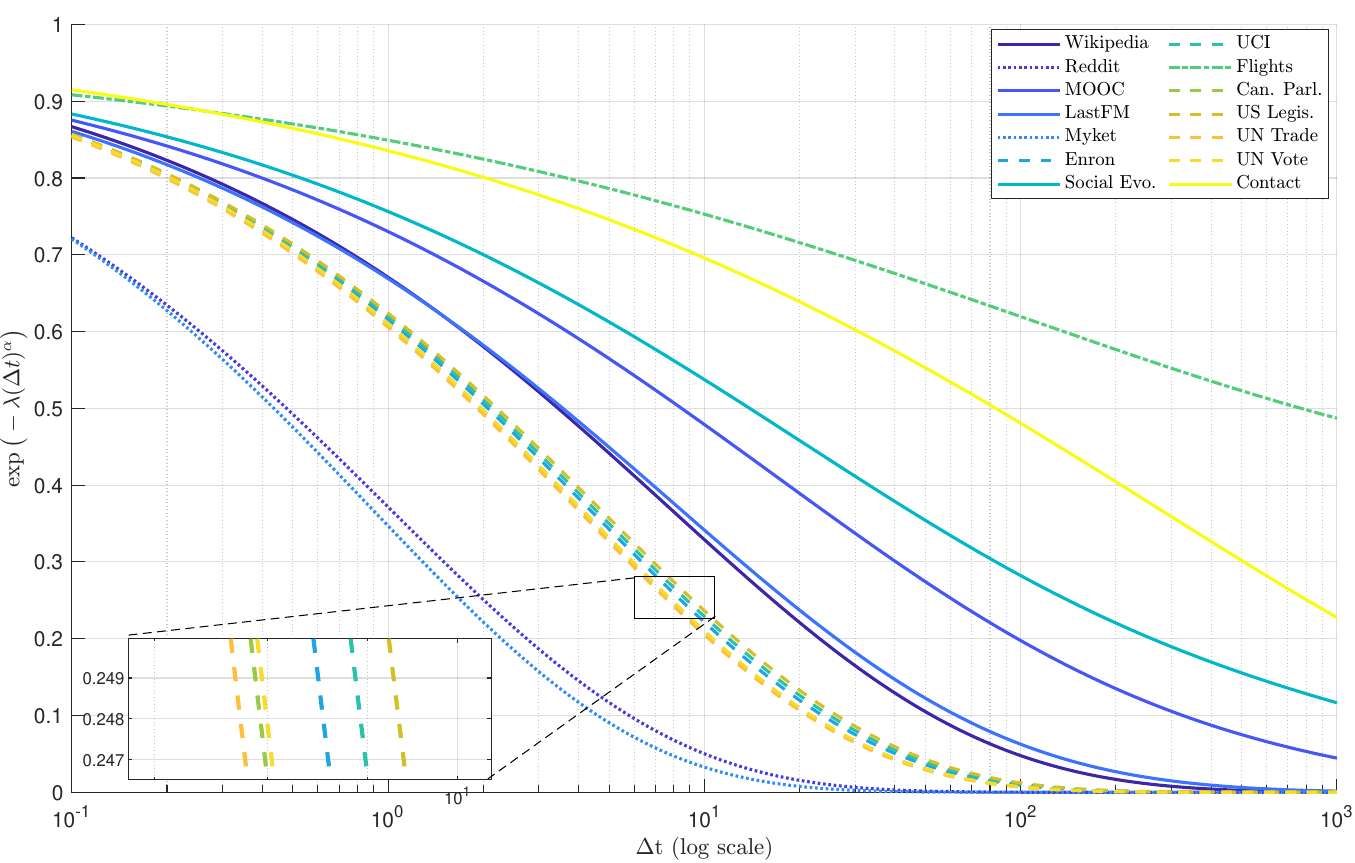}
        \caption{Visualization of learned temporal decay weights $\exp(-\lambda(\Delta t)^\alpha)$ as a function of time interval $\Delta t$ for all datasets at the first layer ($\ell=1$).}
        \label{fig:short}
    \end{minipage}
    \hfill
    % 右边的图
    \begin{minipage}[t]{0.48\textwidth}
        \centering
        \includegraphics[width=\linewidth]{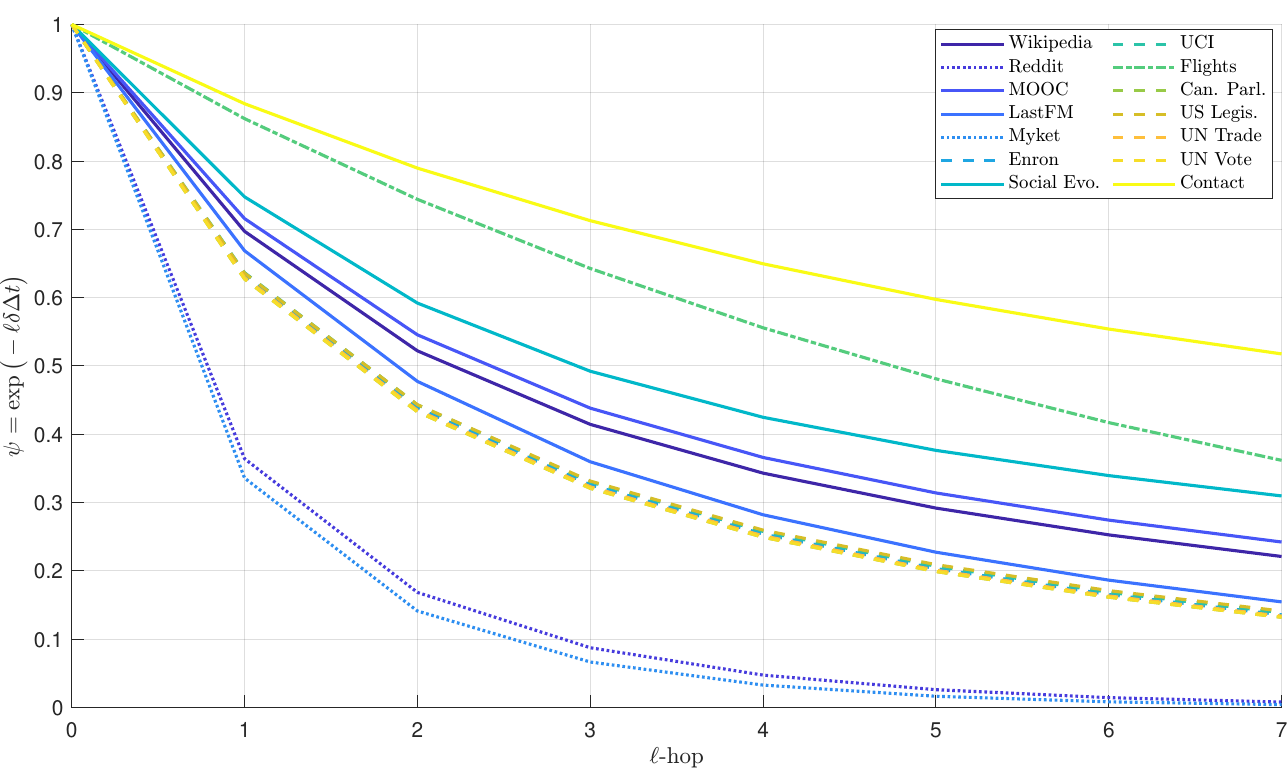}
        \caption{Visualization of learned topological decay weights $\psi = \exp(-\ell \delta^{(\ell)} \Delta t)$ across propagation depths $\ell$ for all datasets, evaluated at $\Delta t = 100$.}
        \label{fig:topo}
    \end{minipage}
\end{figure}

This section provides a comprehensive analysis of the learned decay parameters across all 14 datasets, complementing the summary in Section~\ref{sec:adaptive-decay}. By examining the correlation between learned parameters and dataset properties in Table~\ref{tab:datasets}, we reveal how DSRD automatically adapts to diverse dynamic regimes.

\paragraph{Long-term Retention Decay.}
Figure~\ref{fig:long} shows the learned long-term retention decay $\gamma^{(\ell)}$ across state update steps. DTDG datasets with coarse temporal granularity (years) and high scale ($>$5000), such as UN Trade and US Legis., exhibit strong long-term retention, preserving information across sparse snapshots. Interestingly, some CTDG datasets like Wikipedia also learn strong retention despite fine-grained timestamps, likely reflecting persistent editorial dynamics in social platforms. In contrast, proximity-based graphs with high density ($>$3.0), such as Contact and Social Evo., learn rapid state turnover, capturing the transient nature of face-to-face interactions. Notably, Flights exhibits fast decay despite being a high-scale DTDG, suggesting that transport-domain connection patterns reset across temporal windows rather than accumulating historical dependencies.

\paragraph{Short-term Temporal Decay.}
Figure~\ref{fig:short} shows the learned short-term decay weights $\exp(-\lambda(\Delta t)^\alpha)$ across time intervals. High-density proximity graphs such as Contact and Social Evo., along with the coarse-grained Flights dataset, maintain broad temporal sensitivity, benefiting from smoothing over burst-like interaction patterns or extended time windows. Sparse bipartite graphs with scale $\approx$1.0, such as Reddit and Myket, exhibit the sharpest decay, where individual recent events carry strong predictive signals and older interactions quickly become uninformative. The remaining datasets occupy an intermediate range, reflecting balanced temporal sensitivity across diverse interaction regimes.

\paragraph{Topological Decay.}
Figure~\ref{fig:topo} presents the learned structural decay weights $\psi = \exp(-\ell\delta^{(\ell)} \Delta t)$ across propagation hops. Dense non-bipartite graphs such as Contact and Social Evo.\ (density $>$3.0) maintain broad multi-hop propagation, leveraging rich higher-order connectivity. Flights, despite low overall density, achieves similar broad propagation due to its extremely high scale (15799.55) creating dense within-window structure. Conversely, sparse bipartite graphs including Reddit and Myket (density $<$0.01) learn sharp topological cutoffs, as higher-order paths are sparse and less informative. Political and economic DTDGs, along with most CTDG datasets, show moderate structural receptive fields aligned with their intermediate density characteristics.

\paragraph{Cross-dimension Correlation.}
Comparing the above decay dimensions reveals interpretable joint patterns. Sparse bipartite graphs like Reddit and Myket consistently favor locality across all dimensions, while proximity graphs such as Contact and Social Evo.\ share broad temporal-structural receptive fields but rapid state turnover, suggesting event-driven dynamics where interactions propagate widely but history resets quickly. However, the mapping from graph properties to decay regimes is not fully deterministic: Wikipedia learns atypically strong retention for a CTDG, while Flights behaves unlike other high-scale DTDGs. These exceptions highlight that domain-specific interaction semantics shape the optimal decay regime beyond structural statistics alone.

\begin{figure}[!h]
    \centering
    
    \begin{subfigure}[b]{0.30\textwidth}
        \centering
        \includegraphics[width=\textwidth]{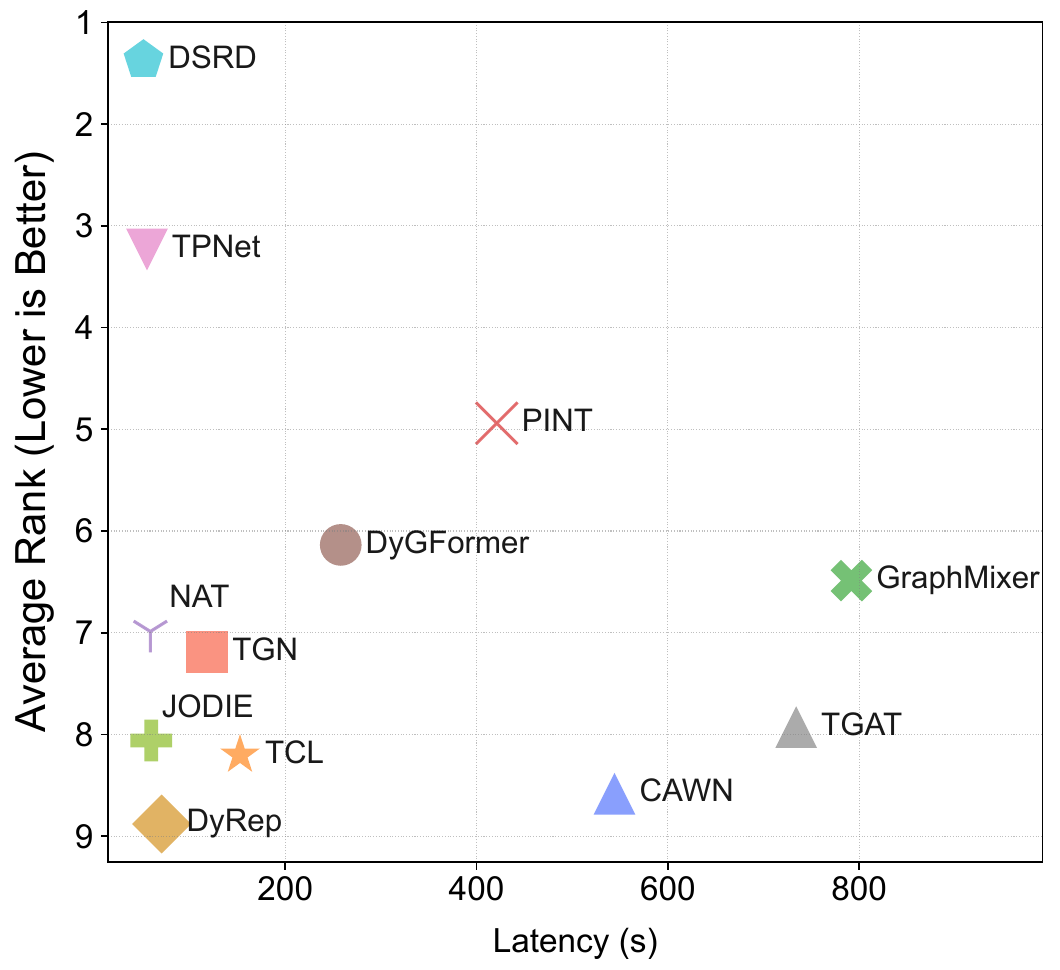}
        \caption{Latency per epoch (s)}
        \label{fig:latency}
    \end{subfigure}
    \hfill
    \begin{subfigure}[b]{0.30\textwidth}
        \centering
        \includegraphics[width=\textwidth]{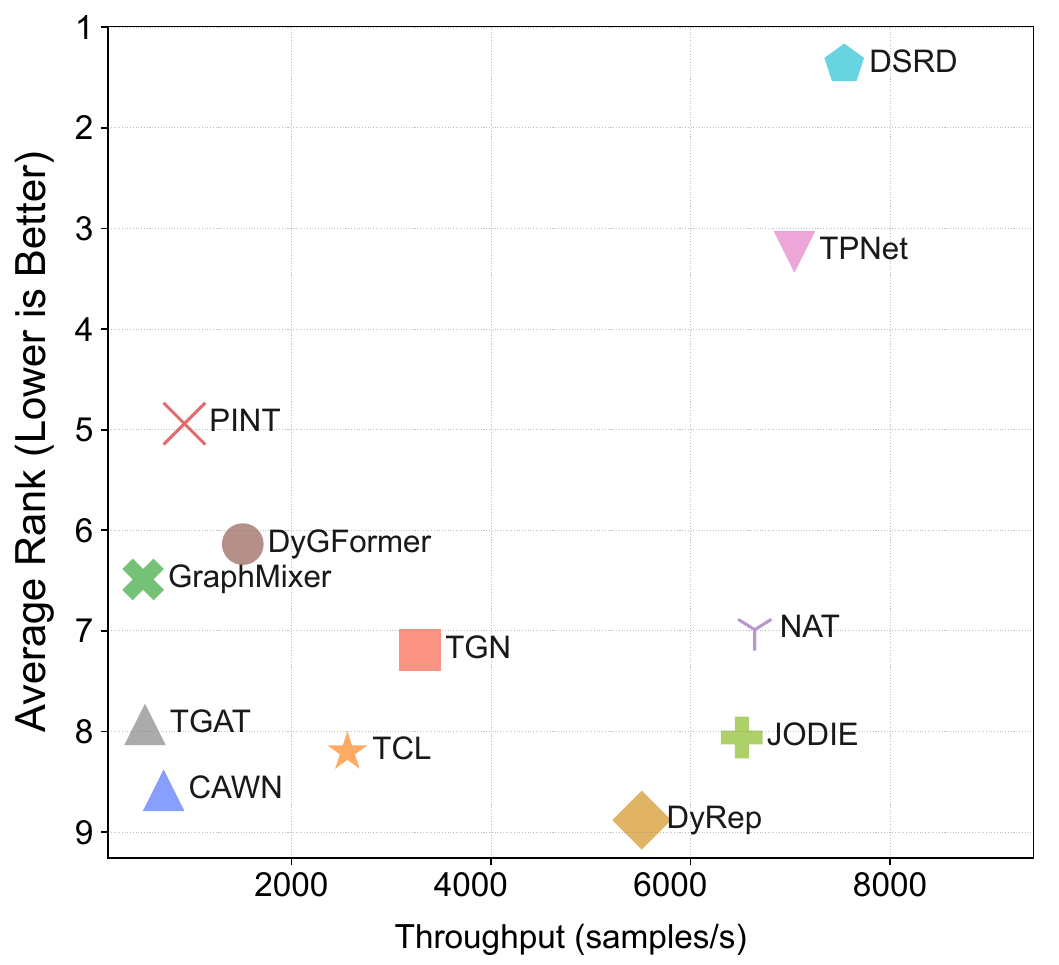}
        \caption{Throughput samples per second}
        \label{fig:throughput}
    \end{subfigure}
    \hfill
    \begin{subfigure}[b]{0.34\textwidth} % 下面这张图通常可以宽一点
        \centering
        \includegraphics[width=\textwidth]{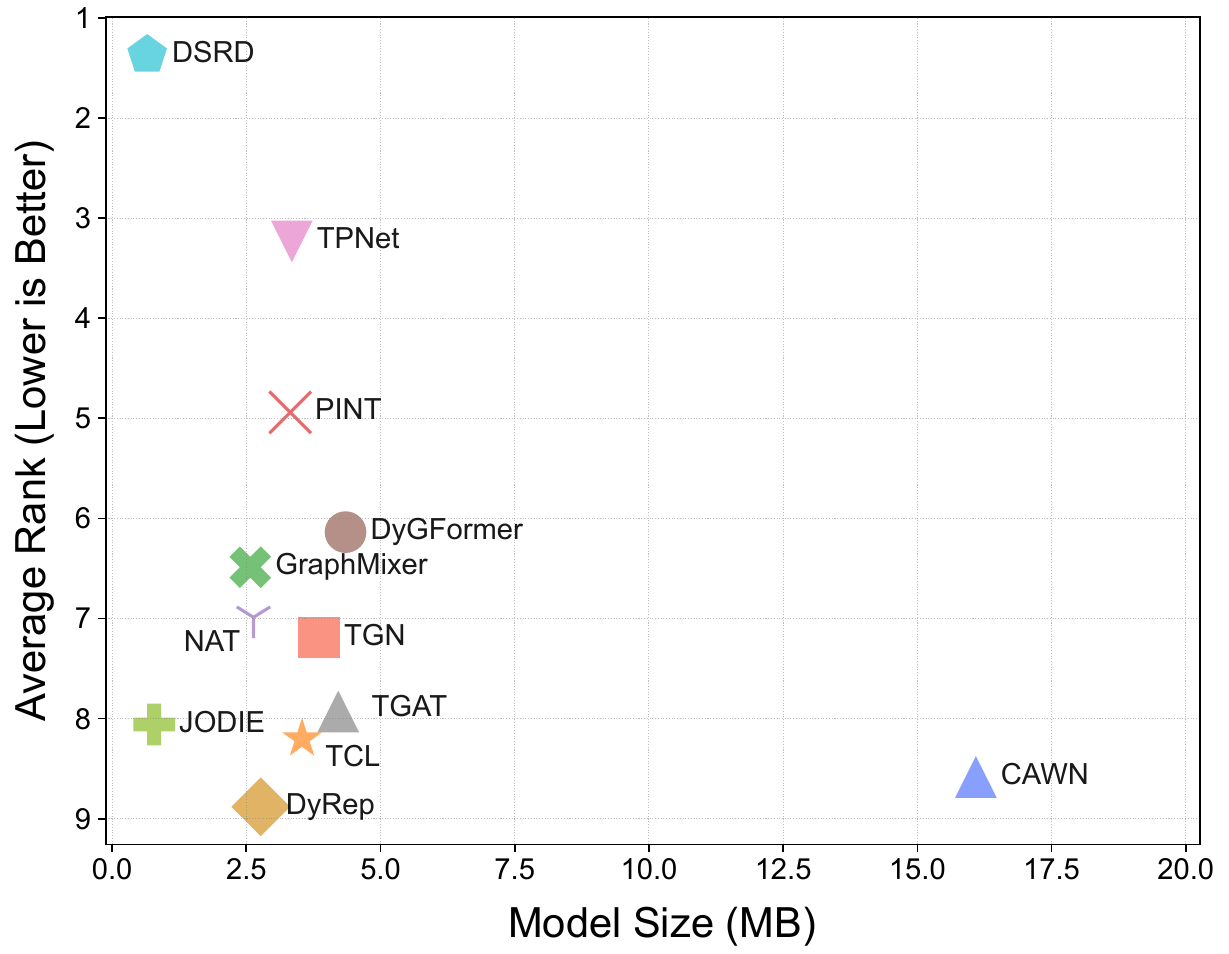}
        \caption{Model Size (MB)}
        \label{fig:model_size}
    \end{subfigure}
    
    \caption{Scalability comparison of dynamic graph methods. (a) Latency, (b) Throughput, and (c) Model size, each plotted against average rank (from Figure~\ref{fig:CD_AP}) across all datasets. The throughput sample in (b) refers to a positive-negative sample pair.}
    \label{fig:scalability}
\end{figure}

\subsection{Scalability}
\label{appendix:scalability}
We evaluate the scalability of dynamic graph learning methods across three dimensions: latency per epoch in Figure~\ref{fig:latency}, throughput (samples processed per second) in Figure~\ref{fig:throughput}, and model size in Figure~\ref{fig:model_size}, each plotted against the average performance rank across datasets. As shown in Figure~\ref{fig:scalability}, DSRD achieves a favorable balance between efficiency and accuracy. It ranks among the best-performing models while maintaining moderate latency and high throughput, significantly outperforming heavy architectures such as PINT and TPNet in runtime. Moreover, the compact model size of DSRD ensures lightweight deployment without compromising predictive power. These results confirm the practicality of DSRD in large-scale streaming settings, where both computational efficiency and memory footprint are critical. We also  report the scalability analysis of DSRD on synthetic graphs with varying numbers of edges (within one batch) in~\cref{tab:scalability}, detailing the computational time and memory consumption.

\begin{table}[!ht]
\centering
\caption{Scalability analysis of DSRD on synthetic graphs.}
\label{tab:scalability}
\fontsize{9pt}{12pt}\selectfont
\begin{tabular}{@{}c|ccc|c@{}}
\toprule
\textbf{Number of Edges} & \textbf{Forward (ms)} & \textbf{Backward (ms)} & \textbf{Total (ms)} & \textbf{Memory (MB)} \\
\midrule
10,000        & $57.36 \pm 5.10$     & $36.40 \pm 1.67$    & $93.75 \pm 5.23$      & 4,002.9 \\
100,000       & $634.68 \pm 104.06$  & $438.39 \pm 73.70$   & $1073.07 \pm 101.84$  & 11,062.2 \\
1,000,000     & $11397.65 \pm 346.14$& $320.59 \pm 194.18$ & $11718.24 \pm 532.95$ & 85,907.8 \\
\bottomrule
\end{tabular}
\end{table}

\subsection{Additional Results of Ablation Study}
The additional results of ablation experiments are shown in Figures~\ref{fig:ablation-AP} and~\ref{fig:ablation-AUC}.

We can observe that removing the entire DSRD block leads to substantial degradation across all datasets, particularly on high-density graphs such as Enron and Social Evo., confirming that the retentive state mechanism is essential for capturing complex interaction patterns. Disabling adaptive temporal decay causes notable drops on discrete-time graphs with coarse granularity (e.g., Flights, US Legis., UN Trade, UN Vote), where high scale values (up to 15859 links per timestep) make distinguishing interaction recency crucial; continuous-time graphs with fine-grained timestamps show smaller sensitivity. Removing topological diffusion primarily affects dense graphs like Social Evo.\ and Enron, where multi-hop context is informative, while sparse graphs exhibit smaller degradation. Ablating the retentive state yields consistent drops across all settings, with more pronounced effects under inductive evaluation on graphs with longer temporal spans (e.g., UN Vote: 72 years), highlighting the importance of persistent memory for generalization. Overall, each component addresses distinct aspects of dynamic graph learning, and their combination enables robust performance across heterogeneous temporal and structural characteristics.

\label{appendix:ablation}
\begin{figure}[!ht]
    \centering
    \includegraphics[width=0.85\linewidth]{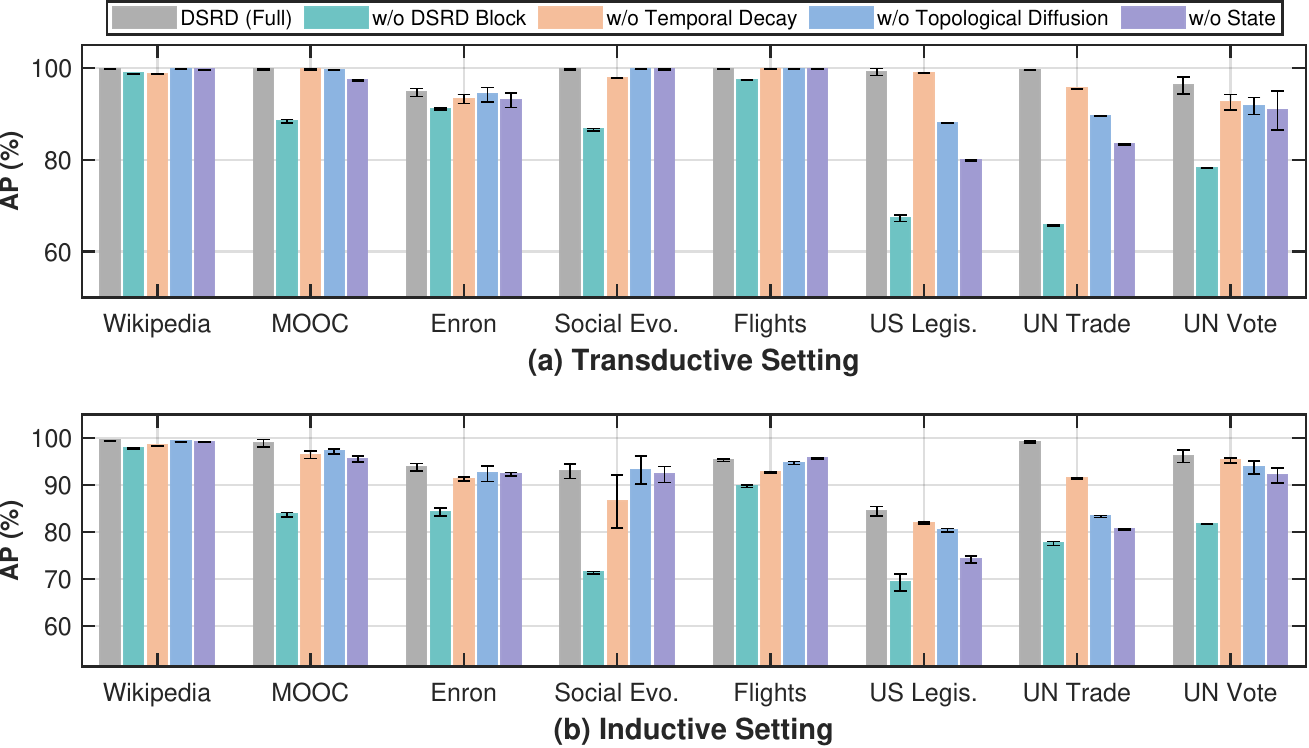}
    \caption{Ablation study on AP (\%) under (a) transductive and (b) inductive settings.}
    \label{fig:ablation-AP}
\end{figure}

\begin{figure}[!ht]
    \centering
    \includegraphics[width=0.85\linewidth]{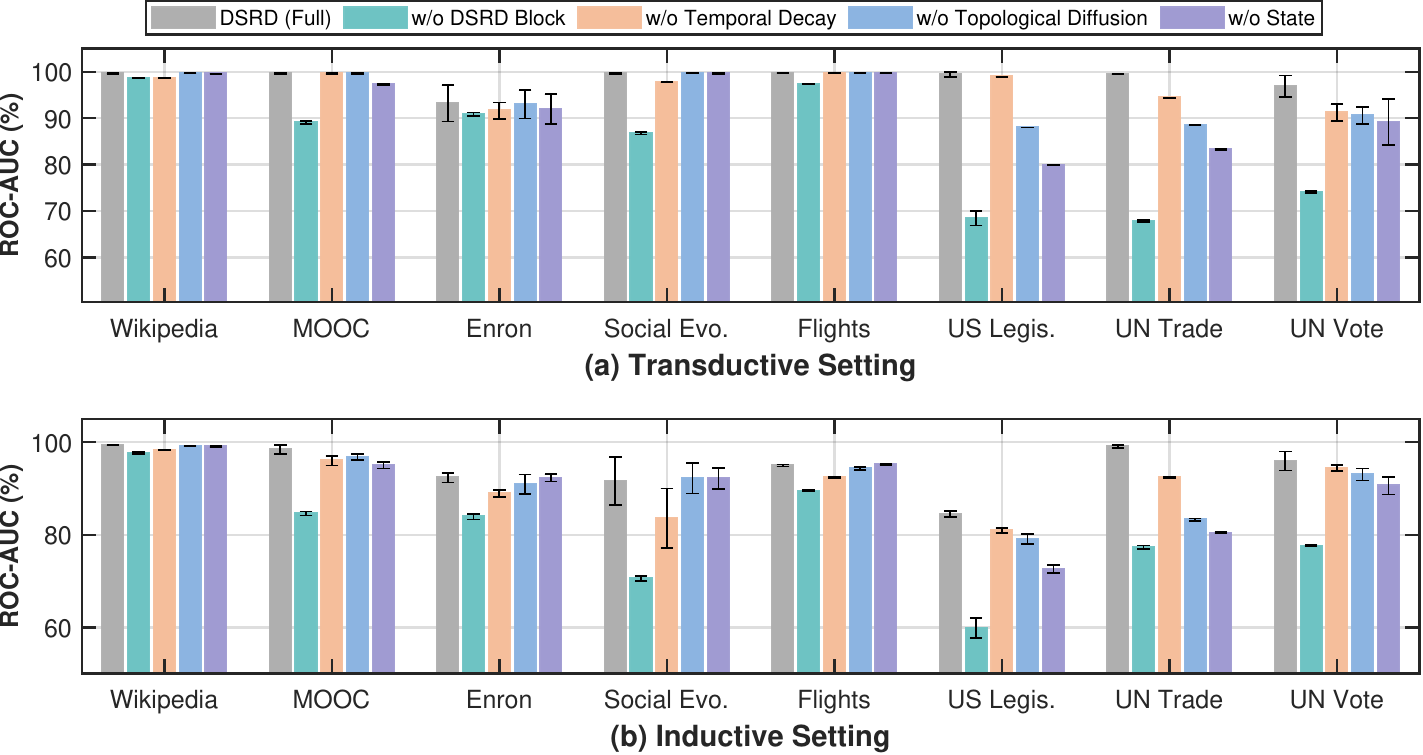}
    \caption{Ablation study on ROC-AUC (\%) under (a) transductive and (b) inductive settings.}
    \label{fig:ablation-AUC}
\end{figure}

\section{Additional Related Work}
\label{appendix:related-work}
\subsection{Structural Propagation and High-Order Dependencies}
Beyond temporal aspects, dynamic graph models must capture structural dependencies that develop over multiple hops~\cite{liu2024self,lu2024tpnet}. Traditional message-passing GNNs propagate along immediate edges, but dynamic scenarios benefit from exploring high-order connections that are not apparent from one-hop interactions~\cite{wang2022tpgnn,yu2023dygformer}. Various strategies have been proposed to address this: CAWN performs guided random walks to collect temporal neighborhoods of higher-order contacts~\cite{luo2022nat}; DyGFormer leverages self-attention over co-occurrence history to capture long-range dependencies without strict locality constraints~\cite{yu2023dygformer}; NAT encodes joint neighbor subgraphs to reflect motifs like triadic closure. Despite these advances, prior approaches often enforce rigid propagation depths or entangle structural and temporal signals in ways that obscure important patterns. Discrete-time methods that aggregate interactions within time windows risk conflating distinct causal orderings, while continuous-time frameworks tend to combine temporal and topological updates, limiting their ability to distinguish complex temporal substructures~\cite{gong2024causal,wang2021cawn,lu2024tpnet}. Recent theoretical work further shows that standard temporal GNN architectures fail to capture certain high-order properties and do not strictly improve in expressiveness with deeper stacking~\cite{walkega2025expressive}. These limitations motivate a framework that decouples temporal and structural modeling while adaptively controlling propagation depth.

\subsection{Simplification, Scalability, and Generalization Gaps}
As dynamic graphs grow in scale and complexity, there is a push towards simplifying models for efficiency and examining their generalization limits~\cite{looks2017deep,cong2023graphmixer,feng2025comprehensive}. One trend is the development of lightweight architectures that avoid heavy attention or recurrence. GraphMixer exemplifies this by using a simple MLP-Mixer style design with fixed-time positional encodings and token-mixing layers, achieving competitive temporal representations without the cost of graph attention~\cite{velivckovic2017gat} or RNN modules~\cite{cong2023graphmixer}. Such simplifications not only improve training speed and stability, but also reduce model complexity, which can help mitigate overfitting to idiosyncratic temporal patterns. For instance, scalability-focused methods like NAT introduce specialized data structures (the N-cache) and neighborhood sampling techniques to handle large dynamic graphs. By storing and updating per-node neighbor dictionaries, NAT can construct higher-order features on the fly with minimal overhead, yielding up to an order-of-magnitude speed-up over prior temporal GNNs that rely on exhaustive neighbor searches or repeated random walks~\cite{luo2022nat}. \citet{poursafaei2022towards} showed that a naively simple method, EdgeBank, which memorizes all observed edges in a lookup table, can outperform many sophisticated models on standard benchmarks. EdgeBank simply predicts a future link if it has been seen historically (and otherwise predicts no link), yet it achieved second-best rankings against state-of-the-art temporal GNNs in certain evaluation settings. This surprising result suggests that many dynamic link prediction benchmarks are dominated by repeated interactions, allowing pure memorization to excel~\cite{poursafaei2022towards}. Consequently, complex models may inadvertently be learning to memorize historical edges rather than truly extrapolate temporal-structural patterns~\cite{martinez2016survey}. When evaluation conditions are made more challenging, for example, predicting interactions that have not occurred before or using harder negative samples, the performance of memorization-based methods drops sharply, and models overly reliant on such memory also suffer disproportionately. These observations highlight a generalization gap: methods that perform well by exploiting easy patterns (recurrent edges) may falter on scenarios requiring inductive reasoning or adaptation to new dynamics. Therefore, an emerging theme in recent work is not only to improve scalability and efficiency, but also to design evaluation protocols and models that reward genuine temporal reasoning over simple historical lookup. Bridging this gap remains an open challenge, calling for dynamic graph learners that are both robust across diverse temporal regimes and capable of generalizing beyond the training history.

\section{Limitations and Future Prospects}
\label{appendix:concerns}

\paragraph{Limitations.}
While DSRD demonstrates strong empirical performance across diverse benchmarks, several limitations warrant discussion. First, the current architecture assumes homogeneous node dynamics, applying the same learnable decay kernels across all entities. In heterogeneous dynamic graphs where different node types (e.g., users vs. items, or different organizational roles) exhibit fundamentally different temporal behaviors, a type-aware parameterization may yield further improvements. Second, although the gating mechanism adaptively balances short-term and long-term signals, it does not explicitly model periodic or seasonal patterns. For domains with strong cyclical dynamics (e.g., daily commuting patterns or annual trade cycles), incorporating Fourier-based temporal encodings or explicit periodicity modeling could enhance representational capacity. Third, the second-order state representation, while expressive, incurs $\mathcal{O}(d^2)$ memory per node per layer, which may become prohibitive for extremely large-scale graphs with high embedding dimensions.

\paragraph{Future Prospects.}
Dynamic graph learning methods, including DSRD, can be applied to domains with significant societal implications. On the positive side, improved temporal link prediction can enhance recommendation systems, fraud detection, and epidemic forecasting. However, such capabilities also raise concerns about privacy (e.g., inferring undisclosed relationships), surveillance, and potential misuse in tracking individual behaviors over time. We encourage practitioners to consider ethical guidelines and privacy-preserving techniques when deploying dynamic graph models in sensitive applications. Several promising directions emerge from this work. First, extending the dual-scale framework to heterogeneous dynamic graphs with type-specific decay mechanisms could broaden applicability. Second, integrating explicit periodicity modeling or leveraging neural ordinary differential equations for continuous-time dynamics may capture more complex temporal patterns. Third, exploring connections between retentive dynamics and state-space models (e.g., Mamba~\cite{gu2024mamba}) could yield more efficient architectures for streaming graph data. Finally, developing theoretical frameworks that characterize the expressiveness of adaptive decay mechanisms relative to fixed-decay baselines remains an important open question.

%%%%%%%%%%%%%%%%%%%%%%%%%%%%%%%%%%%%%%%%%%%%%%%%%%%%%%%%%%%%%%%%%%%%%%%%%%%%%%%
%%%%%%%%%%%%%%%%%%%%%%%%%%%%%%%%%%%%%%%%%%%%%%%%%%%%%%%%%%%%%%%%%%%%%%%%%%%%%%%

\end{document}